\documentclass[twoside]{article}

\usepackage[accepted]{aistats2019}

\bibliographystyle{abbrv}

\usepackage[utf8]{inputenc} 
\usepackage[T1]{fontenc}    
\usepackage{hyperref}       
\usepackage{url}            
\usepackage{booktabs}       
\usepackage{amsfonts}       
\usepackage{nicefrac}       
\usepackage{microtype}      

\usepackage{amssymb}
\usepackage{amsmath}
\usepackage{lmodern}
\usepackage{enumerate}
\usepackage{amsthm}
\usepackage[usenames]{color}
\usepackage{capt-of}
\usepackage{graphicx} 
\usepackage{epstopdf}
\usepackage{wrapfig}
\usepackage[font=scriptsize,labelfont=bf]{caption}
\usepackage{multirow}
\usepackage{caption}
\usepackage{algorithm}
\usepackage{algorithmic}

\usepackage{subfig}
\usepackage{tabularx}
\usepackage{afterpage}
\usepackage{floatflt}
\usepackage{booktabs,colortbl}
\usepackage{arydshln}

\pdfoutput=1

\newtheorem{lemma}{Lemma}[section]

\newtheorem{theorem}{Theorem}[section]

\newtheorem{definition}{Definition}[section]

\newcommand{\inner}[1]{\left\langle#1\right\rangle}

\def\R{\mathbb{R}}

\newcommand{\norm}[1]{\left\|#1\right\|}

\def\argmax{\mathop{\rm arg\,max}\limits}
\def\minop{\mathop{\rm min}\limits}
\def\maxop{\mathop{\rm max}\limits}
\def\sign{\mathop{\rm sign}\limits}

\def\min{\mathop{\rm min}\nolimits}
\def\max{\mathop{\rm max}\nolimits}

\makeatletter
\def\blfootnote{\xdef\@thefnmark{}\@footnotetext}
\makeatother

\newif\ifpaper
\papertrue

\newif\ifappendix
\appendixtrue

\newif\ifnotinapp
\notinappfalse

\title{Robust ReLU networks via Maximization of Linear Regions}

\author{
  Francesco Croce^{1,2} and Matthias Hein^{2}\\ and Maksym Andriushchenko\\
  Department of Mathematics and Computer Science\\
  Saarland University, Saarbr{\"u}cken Informatics Campus, Germany\\
}

\begin{document}

\twocolumn[

\aistatstitle{Provable Robustness of ReLU networks via Maximization of Linear Regions}

\aistatsauthor{ Francesco Croce*$^{,1}$ \And Maksym Andriushchenko*$^{,2}$ \And  Matthias Hein$^1$ }
\runningauthor{Francesco Croce, Maksym Andriushchenko,  Matthias Hein}

\aistatsaddress{ $^1$University of T\"ubingen, Germany \hspace{1mm} $^2$Saarland University, Germany 
}
]

\begin{abstract}
It has been shown that neural network classifiers are not robust. This raises concerns
about their usage in safety-critical systems. We propose in this paper a regularization scheme for ReLU networks
which provably improves the robustness of the classifier by maximizing the linear regions of the classifier as well
as the distance to the decision boundary. Our techniques allow even to find the minimal adversarial perturbation
for a fraction of test points for large networks. In the experiments we show that our approach improves upon
adversarial training both in terms of lower and upper bounds on the robustness and is comparable or better than
the state-of-the-art in terms of test error and robustness.
\end{abstract}

\section{Introduction} 

In recent years it has been highlighted that state-of-the-art neural networks are highly non-robust: small changes to an input image, which are almost
non-perceivable for humans, change the classifier decision and the wrong decision has even high confidence \cite{SzeEtal2014, GooShlSze2015}. This
calls into question the use of neural networks in safety-critical systems e.g. medical diagnosis systems or self-driving cars and opens up possibilities
to actively attack an ML system in an adversarial way \cite{PapEtAl2016a,LiuEtAl2016,KurGooBen2016a,LiuEtAl2016}. Moreover, this non-robustness has also implications on follow-up processes like interpretability.
How should we be able to interpret classifier decisions if very small changes of the input lead to different decisions? 

The finding of the non-robustness
initiated a competition where on the one hand increasingly more sophisticated attack procedures were proposed 
\cite{GooShlSze2015,HuaEtAl2016,MooFawFro2016,CarWag2016} and on the other hand research was focused to develop 
stronger defenses  against these attacks \cite{GuRig2015,GooShlSze2015,ZheEtAl2016,PapEtAl2016a,HuaEtAl2016,BasEtAl2016,MadEtAl2018}. In the end it turned out that for many proposed defenses there exists still a way to attack the classifier successfully \cite{CarWag2017, AthEtAl2018, MosEtAl18}, with the notable exception of \cite{MadEtAl2018} which was shown to be empirically robust wrt the $l_\infty$-norm. Thus considering the high importance of robustness in 
safety-critical machine learning applications, we need robustness guarantees, where one can provide for each test point the radius of a ball on which the
classifier will not change the decision and thus no attack whatsoever will be able to create an adversarial example inside this ball.\\
Therefore recent research has focused on such provable guarantees of a classifier with respect to the $l_1$-norm \cite{CarEtAl2017}, $l_2$-norm \cite{HeiAnd2017} and $l_\infty$-norm \cite{BasEtAl2016, KatzEtAl2017,  RagSteLia2018,    TjeTed2017, WonKol2018,WengEtAl2018}. Some works try to solve the combinatorial problem of computing for each test instance the norm of the minimal perturbation necessary to change the decision \cite{CarEtAl2017, KatzEtAl2017, TjeTed2017}. Unfortunately, these approaches do not scale with normal training to large networks. Another line of research thus focuses on lower bounds on the norm of the minimal perturbation necessary to change the decision
\cite{HeiAnd2017,RagSteLia2018,WonKol2018,WengEtAl2018}. 
\blfootnote{*Equal contribution.}

Moreover, in recent years several new ways to regularize neural networks \cite{SokEtAl2016,CisEtAl2017} or new forms of losses \cite{ElsEtAl2018} have been proposed with the idea of enforcing a large margin, that is a large distance between training instances and decision boundaries. However, these papers do not directly optimize a robustness guarantee. In spirit our paper is closest to \cite{HeiAnd2017,RagSteLia2018,WonKol2018,MirGehVec2018}. All of them are aiming at providing robustness guarantees and at the same time they propose a new way how one can optimize the robustness guarantee during training. Currently, up to our knowledge only \cite{WonKol2018} can optimize robustness wrt to multiple $p$-norms, whereas \cite{HeiAnd2017} is restricted to $l_2$ and \cite{RagSteLia2018,MirGehVec2018} to $l_\infty$.

In this paper we propose a regularization scheme for the class of ReLU networks (feedforward networks with ReLU activation functions including convolutional and residual architectures with max- or sum-pooling layers) which provably increases the robustness of classifiers. We use the fact that these networks lead to continuous piecewise affine classifier functions and show how to get either the optimal minimal perturbation or a  lower bound using the properties of the linear region in which the point lies. As a result of this analysis we propose a new regularization
scheme which directly maximizes the lower bound on the robustness guarantee. This allows us to get classifiers with good test error and good
robustness guarantees at the same time. While we focus on robustness with respect to $l_2$- and $l_\infty$-distance, our approach applies to all $l_p$-norms.
Finally, we show in experiments on four datasets that our approach improves lower bounds as well as upper bounds on the robust test error and can be successfully integrated with adversarial training \cite{GooShlSze2015,MadEtAl2018}. Our main observation is that our proposed regularizer significantly improves provable robustness evaluated with the certification method of \cite{WonKol2018}, and especially with the combinatorial solver of \cite{TjeTed2017}. We note that at the same time, models with adversarial training alone are not certifiable, i.e. they often have vacuous upper bounds on the robust test error.


\section{Local properties of ReLU networks}\label{sec:explicit}
Feedforward neural networks which use piecewise affine activation functions (e.g. ReLU, leaky ReLU) and are linear in the output layer can be rewritten as continuous piecewise affine functions \cite{AroEtAl2018,CroHei18}.
\begin{definition} A function $f:\R^d \rightarrow \R$ is called \emph{piecewise affine} if there exists a finite set of polytopes $\{Q_r\}_{r=1}^M$ 
	(referred to as \emph{linear regions} of $f$) such that $\cup_{r=1}^M Q_r= \R^d$ and $f$ is an affine function when restricted to  every $Q_r$.
\end{definition}

In the following we introduce the notation required for the explicit description of the linear regions and decision boundaries of a multi-class ReLU classifier $f:\R^d \rightarrow \R^K$ (where $d$ is the input space dimension and $K$ the number of classes) which has no activation function in the output layer. The decision of $f$ at a point $x$ is given by $\argmax_{r=1,\ldots,K} f_r(x)$. The discrimination between linear regions and decision boundaries allows us to share the linear regions across classifier components.

We denote by $\sigma:\R \rightarrow \R$, $\sigma(t)=\max\{0,t\}$, the ReLU activation function, $L+1$ is the number of layers and $W^{(l)} \in\mathbb{R}^{n_l \times n_{l-1}}$ and $b^{(l)} \in \R^{n_l}$ respectively are the weights and offset vectors of layer $l$,  for $l=1,\ldots,L+1$ and $n_0=d$. If $x\in \R^d$ and $g^{(0)}(x)=x$ we define recursively the pre- and post-activation output of every layer as
\begin{gather*}    f^{(k)}(x)=W^{(k)}g^{(k-1)}(x)+b^{(k)}, \quad \mathrm{ and }\\g^{(k)}(x)=\sigma(f^{(k)}(x)), \quad k=1,\ldots,L,\end{gather*}
so that the resulting classifier is obtained as $f^{(L+1)}(x)=W^{(L+1)}g^{(L)}(x) + b^{(L+1)}$.

We derive, following \cite{CroHei18,CroRauHei2019}, the description of the polytope $Q(x)$ in which $x$ lies and the resulting affine function when $f$ is restricted to $Q(x)$. We assume for this that $x$ does not lie on
the boundary between two polytopes (which is almost always true as the faces shared by two or more polytopes have dimension strictly smaller than $d$).
Let $\Delta^{(l)},\Sigma^{(l)}\in \R^{n_l \times n_l}$ for $l=1,\ldots,L$ be diagonal matrices defined elementwise as
\begin{gather*} \Delta^{(l)}(x)_{ij} = \begin{cases} \sign(f_i^{(l)}(x)) & \textrm{ if } i=j,\\ 0 & \textrm{ else.} \end{cases}, \\
\Sigma^{(l)}(x)_{ij} = \begin{cases} 1 & \textrm{ if } i=j \textrm{ and } f_i^{(l)}(x)>0,\\ 0 & \textrm{ else.} \end{cases}.\end{gather*}
This allows us to write $f^{(k)}(x)$ as composition of affine functions, that is
\[\begin{split} f^{(k)}(x)=&W^{(k)}\Sigma^{(k-1)}(x)\Big(W^{(k-1)} \Sigma^{(k-2)}(x)\\ & \times \Big(\ldots  \Big( W^{(1)}x + b^{(1)}\Big) \ldots\Big)+ b^{(k-1)} \Big) + b^{(k)}, \end{split}\]
We can further simplify the previous expression as $f^{(k)}(x) = V^{(k)}x + a^{(k)}$, with $V^{(k)} \in \R^{n_k \times d}$ and $a^{(k)} \in \R^{n_k}$ given by
\begin{gather*} V^{(k)} = W^{(k)}\Big( \prod_{l=1}^{k-1} \Sigma^{(k-l)}(x)W^{(k-l)}\Big) \quad \mathrm{ and }\\ a^{(k)} = b^{(k)} + \sum_{l=1}^{k-1} \Big(\prod_{m=1}^{k-l} W^{(k+1-m)} \Sigma^{(k-m)}(x)\Big) b^{(l)}. \end{gather*}
Note that a forward pass through the network is sufficient to compute $V^{(k)}$ and $b^{(k)}$ for every $k$, which results in only a small overhead compared to the usual effort necessary to compute the output of $f$ at $x$.
We are then able to characterize the polytope $Q(x)$ as intersection of $N=\sum_{l=1}^L n_l$ half spaces given by 
\[\Gamma_{l,i}=\big\{z \in \R^d \,\Big|\, \Delta^{(l)}(x)\big(V_i^{(l)}z +a_i^{(l)}\big)\geq 0\big\},\]
for $l=1,\ldots,L$, $i=1,\ldots,n_l$, namely
\[Q(x)=\bigcap_{l=1,\ldots,L}\bigcap_{i=1,\ldots,n_l} \Gamma_{l,i}. \]
Note that $N$ is also the number of hidden units of the network.
Finally, we can write 
\[ \left.f^{(L+1)}(z)\right|_{Q(x)}=V^{(L+1)}z + a^{(L+1)},\] which represents the affine restriction of $f$ to $Q(x)$.
One can further distinguish the subset $Q_c(x)$ of $Q(x)$ assigned to a specific class $c$, among the $K$ available ones, which is given by
\[\begin{split} Q_c(x)= \bigcap_{\stackrel{s=1,...,K}{s\neq c}} \big\{& z \in \R^d \,\big|\,\inner{V^{(L+1)}_c -V^{(L+1)}_s, z } \\ &+ a^{(L+1)}_c - a^{(L+1)}_s \geq 0\big\} \cap Q(x), \end{split} \]
where $V^{(L+1)}_r$ is the $r$-th row of $V^{(L+1)}$. The set $Q_c(x)$ is again a polytope as it is an intersection of polytopes and it holds $Q(x)=\bigcup_{c=1,...,K}Q_c(x)$.
We refer to \cite{CroRauHei2019} how to integrate other operations/layer types e.g. max-pooling, residual and dense nets or other piecewise linear activation functions
like leaky ReLU.

\section{Robustness guarantees for ReLU networks}\label{sec:guar}
In the following we first define the problem of the minimal adversarial perturbation and then derive robustness guarantees for ReLU networks.
We call a decision of a classifier $f$ robust at $x$ if small changes of the input do not alter the decision.  
Formally, this can be described as optimization problem \eqref{eq:advopt} \cite{SzeEtal2014}. 
If the classifier outputs class $c$ for input $x$,
assuming a unique decision,  
the \emph{robustness} of $f$ at $x$ is given by
\begin{align}\label{eq:advopt}
\begin{split} \minop_{\delta \in \mathbb{R}^d} \; \norm{\delta}_p, \quad \textrm{s.th.}  \quad    &\maxop_{l\neq c} \; f_l(x+\delta) \geq f_c(x+\delta)\\ & x+\delta \in C, \end{split}
\end{align}
where $C$ is a constraint set which the generated point $x+\delta$ has to satisfy, e.g., an image has to be in $[0,1]^d$. 
The complexity of the optimization problem \eqref{eq:advopt} depends on the classifier $f$, but it is typically non-convex, see \cite{KatzEtAl2017} for a hardness result for neural networks.
\\
For standard neural networks $\norm{\delta}_p$ is very small for \emph{almost any} input $x$ of the data generating distribution, which questions the use of such classifiers in safety-critical systems. The solutions of \eqref{eq:advopt}, $x+\delta$, are called \emph{adversarial samples}.

For a linear classifier, $f(x)=Wx+b$, with $C=\mathbb{R}^d$ one can compute the solution of \eqref{eq:advopt} in closed form \cite{HuaEtAl2016}
\[ \norm{\delta}_p = \minop_{s\neq c} \frac{|\inner{w_c-w_s,x}+b_c-b_s|}{\norm{w_c-w_s}_q},\]
where $\norm{\cdot}_q$ is the dual norm of $\norm{\cdot}_p$, that is $\frac{1}{p}+\frac{1}{q}=1$. In \cite{HeiAnd2017} it has been shown that box constraints $C=[a,b]^d$ can be integrated for linear classifiers which results in simple convex optimization problems. 
In the following we use the intuition from linear classifiers and the particular structures derived in Section \ref{sec:explicit} to come up with robustness guarantees, that means lower bounds on the optimal solution of \eqref{eq:advopt}, for ReLU networks.
Moreover, we show that it is possible to compute the minimal perturbation for some of the inputs $x$  even though the general problem is NP-hard \cite{KatzEtAl2017}.

Let us start analyzing how we can solve efficiently problem \eqref{eq:advopt} inside each linear
region $Q(x)$. We first need the definition of two important quantities:
\begin{lemma}\label{le:bd-polytope}
The $l_p$-distance $d_B(x)=\min_{z \in \partial Q(x)} \norm{z-x}_p$ of $x$ to the boundary of the polytope $Q(x)$   is given by 
\begin{align*} d_B(x) = \minop_{l=1,\ldots,L}  \minop_{j=1,\ldots,n_l} \frac{\big|\inner{V^{(l)}_j,x}+a^{(l)}_j\big|}{\norm{V^{(l)}_j}_q},
\end{align*}
where $V_j^{(l)}$ is the $j$-th row of  $V^{(l)}$ and $\norm{\cdot}_q$ is the dual norm of $\norm{\cdot}_p$ ($\frac{1}{p}+\frac{1}{q}=1$).
\end{lemma}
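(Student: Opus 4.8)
The plan is to prove the two inequalities $d_B(x)\ge r^{\ast}$ and $d_B(x)\le r^{\ast}$, where
\[
r^{\ast}:=\minop_{l=1,\ldots,L}\;\minop_{j=1,\ldots,n_l}\frac{\big|\inner{V^{(l)}_j,x}+a^{(l)}_j\big|}{\norm{V^{(l)}_j}_q}
\]
is the claimed value (I assume every row $V^{(l)}_j\neq 0$; a zero row imposes no constraint near $x$ and may be discarded). First I would record two elementary facts. \emph{(i)} Since $f^{(l)}_j(x)=\inner{V^{(l)}_j,x}+a^{(l)}_j$ and $\Delta^{(l)}(x)_{jj}=\sign\big(f^{(l)}_j(x)\big)$, the defining inequality of $Q(x)$ indexed by $(l,j)$ is $\Delta^{(l)}(x)_{jj}\big(\inner{V^{(l)}_j,z}+a^{(l)}_j\big)\ge 0$ and, evaluated at $z=x$, equals $\big|f^{(l)}_j(x)\big|>0$ because $x$ does not lie on a boundary between polytopes; hence $x\in\mathrm{int}\,Q(x)$. \emph{(ii)} The $l_p$-distance from $x$ to the hyperplane $H_{l,j}=\{z:\inner{V^{(l)}_j,z}+a^{(l)}_j=0\}$ equals $\big|\inner{V^{(l)}_j,x}+a^{(l)}_j\big|/\norm{V^{(l)}_j}_q$; this is the standard point-to-hyperplane formula, coming from H\"older's inequality $\big|\inner{V^{(l)}_j,\delta}\big|\le\norm{V^{(l)}_j}_q\norm{\delta}_p$ together with the fact that equality is attained for a suitable $\delta$. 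Write $r_{l,j}$ for this distance, so $r^{\ast}=\min_{l,j}r_{l,j}$.

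\emph{Lower bound.} I would show that every $z$ with $\norm{z-x}_p<r^{\ast}$ lies in $\mathrm{int}\,Q(x)$, so no such point can be on $\partial Q(x)$, giving $d_B(x)\ge r^{\ast}$. Indeed, for each $(l,j)$, combining (i), the triangle inequality and H\"older,
\begin{align*}
\Delta^{(l)}(x)_{jj}\big(\inner{V^{(l)}_j,z}+a^{(l)}_j\big)
&\;\ge\;\big|f^{(l)}_j(x)\big|-\big|\inner{V^{(l)}_j,z-x}\big|\\
&\;\ge\;\norm{V^{(l)}_j}_q\big(r_{l,j}-\norm{z-x}_p\big)\;>\;0 ,
\end{align*}
since $\norm{z-x}_p<r^{\ast}\le r_{l,j}$ and $\norm{V^{(l)}_j}_q>0$. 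Thus every constraint is strictly satisfied at $z$.

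\emph{Upper bound.} Let $(l^{\ast},j^{\ast})$ attain the minimum $r^{\ast}$ and pick $z^{\ast}\in H_{l^{\ast},j^{\ast}}$ with $\norm{z^{\ast}-x}_p=r^{\ast}$. Restricted to the segment $[x,z^{\ast}]$ every defining inequality of $Q(x)$ is an affine function of the segment parameter, positive at $x$ by (i); the $(l^{\ast},j^{\ast})$-th one is moreover zero at $z^{\ast}$, hence nonnegative on all of $[x,z^{\ast}]$. Let $w$ be the first point of the segment (starting from $x$) at which some inequality becomes an equality, taking $w=z^{\ast}$ if this never happens before the endpoint. Then $[x,w]\subseteq Q(x)$ (each inequality stays $\ge 0$ up to $w$, by affineness and the intermediate value theorem) while a constraint is active at $w$, so $w\in\partial Q(x)$; since $\norm{w-x}_p\le\norm{z^{\ast}-x}_p=r^{\ast}$, we conclude $d_B(x)\le r^{\ast}$. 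The two bounds together give $d_B(x)=r^{\ast}$, which is the assertion.

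The only genuinely delicate point is the upper bound: the foot of the $l_p$-perpendicular $z^{\ast}$ from $x$ onto the binding hyperplane need not itself belong to $Q(x)$, so one cannot simply declare $z^{\ast}$ a point of $\partial Q(x)$; the segment-exit argument above (equivalently, a connectedness argument on the closed ball $\{z:\norm{z-x}_p\le r^{\ast}\}$) is what actually produces a point of $\partial Q(x)$ within distance $r^{\ast}$. Everything else reduces to the dual-norm computation (ii) and the triangle inequality, and the argument works verbatim for every $l_p$-norm.
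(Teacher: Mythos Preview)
Your proof is correct and follows the same route as the paper: reduce $d_B(x)$ to the minimum of the point-to-hyperplane distances and compute the latter via H\"older's inequality. The paper's argument is considerably terser---it simply asserts that the distance to $\partial Q(x)$ equals the minimum distance to the bounding hyperplanes and then derives the single-hyperplane formula---whereas you carefully prove both inequalities.

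One small remark on your final paragraph: the concern that $z^{\ast}$ might lie outside $Q(x)$ is actually unfounded here. Rerunning your lower-bound estimate with the weak inequality $\norm{z-x}_p\le r^{\ast}$ gives, for every $(l,j)$,
\[
\Delta^{(l)}(x)_{jj}\big(\inner{V^{(l)}_j,z^{\ast}}+a^{(l)}_j\big)\;\ge\;\norm{V^{(l)}_j}_q\big(r_{l,j}-r^{\ast}\big)\;\ge\;0,
\]
so $z^{\ast}\in Q(x)$; since the $(l^{\ast},j^{\ast})$-constraint is tight at $z^{\ast}$ and $V^{(l^{\ast})}_{j^{\ast}}\neq 0$, this forces $z^{\ast}\in\partial Q(x)$ directly. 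Your segment-exit construction is perfectly valid, just not needed.
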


\ifpaper
\begin{proof}
Due to the polytope structure of $Q(x)$ it holds that $d_B(x)$ is the minimum distance to the hyperplanes, $\inner{V^{(l)} _j,z}+a^{(l)}_j=0$, which constitute the boundary
of $Q(x)$. It is straightforward to check that the minimum $l_p$-distance of a hyperplane $H=\{z \,|\,\inner{w,z}+b=0\}$ to $x$ is given by $\frac{|\inner{w,x}+b|}{\norm{w}_q}$ where $\norm{\cdot}_q$ is the
dual norm. This can be obtained as follows
\begin{align}
\minop_{z \in \R^d} \Big\{ \norm{x-z}_p \,\big|\, \inner{w,z}+b=0\Big\}
\end{align}
Introducing $\delta=z-x$ we get
\begin{align}
\minop_{\delta \in \R^d} \Big\{ \norm{\delta}_p \,\big|\, \inner{w,\delta}+\inner{w,x}+b=0\Big\}
\end{align}
Note that by H{\"o}lder inequality one has $-\norm{w}_q\norm{\delta}_p \leq \inner{w,\delta} \leq \norm{w}_q\norm{\delta}_p$, which yields $\norm{\delta}_p \geq \frac{|\inner{w,\delta}|}{\norm{w}_p}$.
Noting that $\inner{w,\delta}=-(\inner{w,x}+b)$ we get $\norm{\delta}_p \geq \frac{|\inner{w,x}+b|}{\norm{w}_p}$ and equality is achieved when one has equality in the H{\"o}lder inequality.
\end{proof}
\fi

For the decision boundaries in $Q(x)$, with $c=\argmax_{r=1,\ldots,K} f^{(L+1)}_r(x)$, we define the quantity $d_D(x)$ as
\[\minop_{\stackrel{s=1,\ldots,K}{s\neq c}} \frac{\Big|\inner{V^{(L+1)}_c -V^{(L+1)}_s, x } + a^{(L+1)}_c - a^{(L+1)}_s \Big|}{\norm{V^{(L+1)}_c -V^{(L+1)}_s}_q}.\]
\begin{lemma}\label{le:bd-decision}
If $d_D(x)\leq d_B(x)$, then $d_D(x)$ is the minimal $l_p$-distance of $x$ to the decision boundary of the ReLU network $f^{(L+1)}$.
\end{lemma}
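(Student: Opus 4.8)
The plan is to establish the two inequalities that together pin down $d_D(x)$ as the minimal $l_p$-distance from $x$ to the decision boundary of $f^{(L+1)}$, exploiting that on the \emph{entire} polytope $Q(x)$ the network coincides with the single affine map $z\mapsto V^{(L+1)}z+a^{(L+1)}$. First I would record, exactly as in the proof of Lemma~\ref{le:bd-polytope}, that for each $s\neq c$ the term $\frac{\left|\inner{V^{(L+1)}_c-V^{(L+1)}_s,x}+a^{(L+1)}_c-a^{(L+1)}_s\right|}{\norm{V^{(L+1)}_c-V^{(L+1)}_s}_q}$ is precisely the $l_p$-distance $\mathrm{dist}_p(x,H_s)$ from $x$ to the hyperplane $H_s=\big\{z\,\big|\,\inner{V^{(L+1)}_c-V^{(L+1)}_s,z}+a^{(L+1)}_c-a^{(L+1)}_s=0\big\}$, so that $d_D(x)=\min_{s\neq c}\mathrm{dist}_p(x,H_s)$. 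The second ingredient is the elementary observation that, since $d_B(x)$ is the $l_p$-distance from $x$ to $\partial Q(x)$ and $x$ lies strictly inside $Q(x)$, the closed ball $\{z:\norm{z-x}_p\le d_B(x)\}$ is contained in $Q(x)$; hence on this ball $f^{(L+1)}$ is given by the affine formula above, and in particular $f^{(L+1)}_c(z)-f^{(L+1)}_s(z)=\inner{V^{(L+1)}_c-V^{(L+1)}_s,z}+a^{(L+1)}_c-a^{(L+1)}_s$ for all such $z$.

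For the upper bound, I would pick an index $s$ attaining the minimum in $d_D(x)$ and let $z^\ast$ be the point of $H_s$ nearest to $x$. Then $\norm{z^\ast-x}_p=d_D(x)\le d_B(x)$, so $z^\ast\in Q(x)$, and since $z^\ast\in H_s$ the affine representation yields $f^{(L+1)}_c(z^\ast)=f^{(L+1)}_s(z^\ast)$, so $z^\ast$ lies on the decision boundary of $f^{(L+1)}$. Hence the minimal distance to the decision boundary is at most $d_D(x)$.

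For the lower bound, I would show that no point $z$ with $\norm{z-x}_p<d_D(x)$ can lie on the decision boundary. Such a $z$ satisfies $\norm{z-x}_p<d_B(x)$, hence $z\in Q(x)$ where the affine representation holds, and for every $s\neq c$ we have $\norm{z-x}_p<d_D(x)\le\mathrm{dist}_p(x,H_s)$, so $z$ lies strictly on the same side of $H_s$ as $x$; since the decision at $x$ is uniquely $c$, i.e.\ $f^{(L+1)}_c(x)>f^{(L+1)}_s(x)$ for all $s\neq c$, this forces $f^{(L+1)}_c(z)>f^{(L+1)}_s(z)$ for all $s\neq c$, so the decision at $z$ is still $c$. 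Combining the two bounds gives the claim.

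The argument is essentially routine once one notices that within the single linear region $Q(x)$ the exact linear-classifier formula applies; the only points requiring care are bookkeeping ones: checking that the affine description of $f^{(L+1)}$ is legitimately used at $z^\ast$, which may sit on $\partial Q(x)$ when $d_D(x)=d_B(x)$ (this is fine because $Q(x)$ is closed), and handling the degenerate case in which $V^{(L+1)}_c=V^{(L+1)}_s$ for some $s$ (then that term is $+\infty$ and is irrelevant to the minimum, while the hypothesis $d_D(x)\le d_B(x)<\infty$ guarantees some admissible $s$ remains). I do not expect any step to present a genuine obstacle.
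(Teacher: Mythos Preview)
Your proof is correct and follows essentially the same approach as the paper: both use that the closed $l_p$-ball of radius $d_B(x)$ lies in $Q(x)$, where $f^{(L+1)}$ coincides with the affine map $z\mapsto V^{(L+1)}z+a^{(L+1)}$, so that the minimizer realizing $d_D(x)$ for the linear classifier is an actual decision-boundary point of the network, while no closer decision-boundary point can exist. Your version is simply a more explicit upper/lower bound decomposition of the paper's two-sentence argument, and your handling of the boundary case $d_D(x)=d_B(x)$ and of degenerate rows $V^{(L+1)}_c=V^{(L+1)}_s$ is a welcome bit of extra care that the paper leaves implicit.
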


\begin{proof}
First we note that $d_D(x)$ is the distance of $x$ to the decision boundary for the linear multi-class classifier $g(x)=V^{(L+1)}x + a^{(L+1)}$
which is equal to $f^{(L+1)}$ on $Q(x)$.
If $d_D(x) \leq d_B(x)$ then the point realizing the minimal distance to the decision boundary $d_D(x)$ is inside $Q(x)$ and as $d_D(x)<d_B(x)$
there cannot exist another point  on the decision boundary of $f^{(L+1)}$ outside of $Q(x)$ having a smaller $l_p$-distance to $x$. Thus $d_D(x)$ is the minimal $l_p$-distance of $x$ to the decision boundary of $f^{(L+1)}$. 
\end{proof}

The next theorem combines both results to give a lower bound or the optimal solution to the optimization problem \eqref{eq:advopt}.
\begin{theorem}\label{th:main}
We get the following robustness guarantees:
\begin{enumerate}
\item If $d_B(x) \leq d_D(x)$, then $d_B(x)$ is a lower bound on the minimal $l_p$-norm of the perturbation necessary to change the class (optimal solution of \eqref{eq:advopt}).
\item If $d_D(x) \leq d_B(x)$, then $d_D(x)$ is equal to the minimal $l_p$-norm necessary to change the class (optimal solution of optimization problem \eqref{eq:advopt}).
\end{enumerate}
\end{theorem}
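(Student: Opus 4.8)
The plan is to reduce the theorem to the two lemmas by exploiting the two facts that make the picture essentially linear near $x$: on all of $Q(x)$ the network $f^{(L+1)}$ coincides with the single affine map $g(z)=V^{(L+1)}z+a^{(L+1)}$, and $x$ lies in the interior of $Q(x)$ (as assumed in Section~\ref{sec:explicit}).

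I would dispatch Case~2 first, since it is essentially immediate. Lemma~\ref{le:bd-decision} already states that when $d_D(x)\le d_B(x)$, $d_D(x)$ equals the minimal $l_p$-distance from $x$ to the decision boundary of $f^{(L+1)}$; and for $C=\R^d$ that distance is by definition the optimal value of \eqref{eq:advopt}. I would append one sentence noting that imposing a box constraint $C$ only shrinks the feasible set of \eqref{eq:advopt} and hence only increases its optimum, so $d_D(x)$ remains at least a valid lower bound in the constrained case, and is exact for $C=\R^d$.

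Case~1 is where the work is, and it rests on one claim: if $\norm{\delta}_p<d_B(x)$ then $x+\delta\in\mathrm{int}\,Q(x)$ and the decision of $f^{(L+1)}$ at $x+\delta$ is still $c$. For the first part I would use the explicit half-space description $Q(x)=\bigcap_{l,i}\Gamma_{l,i}$: by the formula of Lemma~\ref{le:bd-polytope} the $l_p$-distance from $x$ to each bounding hyperplane $\{z : \inner{V^{(l)}_j,z}+a^{(l)}_j=0\}$ is at least $d_B(x)>\norm{\delta}_p$, so $x+\delta$ lies strictly on the same side of every such hyperplane as $x$, i.e.\ in $\mathrm{int}\,Q(x)$. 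For the second part, since $f^{(L+1)}=g$ on $Q(x)$ it suffices to show the linear classifier $g$ does not change class on the open ball of radius $d_B(x)$; but the distance from $x$ to each decision hyperplane $\{z : \inner{V^{(L+1)}_c-V^{(L+1)}_s,z}+a^{(L+1)}_c-a^{(L+1)}_s=0\}$ of $g$ is, by exactly the elementary hyperplane-distance computation already used in the proof of Lemma~\ref{le:bd-polytope}, at least $d_D(x)\ge d_B(x)>\norm{\delta}_p$, and since $c$ is the (unique) argmax at $x$, the point $x+\delta$ still satisfies $g_c>g_s$ for every $s\ne c$. Combining the two parts, no perturbation with $\norm{\delta}_p<d_B(x)$ changes the decision of $f^{(L+1)}$, hence \emph{a fortiori} none satisfying also $x+\delta\in C$; this is precisely the assertion that $d_B(x)$ lower-bounds the optimal value of \eqref{eq:advopt}.

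The main obstacle is the first part of the Case~1 claim — the geometric fact that a perturbation strictly shorter than $d_B(x)$ cannot escape $Q(x)$ — since this is exactly what licenses replacing $f^{(L+1)}$ by the affine map $g$ and then invoking the closed-form linear-classifier distance; everything after that is the hyperplane computation already carried out for Lemma~\ref{le:bd-polytope}. One minor point I would address explicitly is the boundary case $\norm{\delta}_p=d_B(x)$: if $d_B(x)<d_D(x)$ the argument goes through verbatim, and if $d_B(x)=d_D(x)$ this common value may be attained, which is consistent with $d_B(x)$ being a tight lower bound, so the conclusion is unaffected either way.
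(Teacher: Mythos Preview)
Your proposal is correct and follows the same approach as the paper; the paper's own proof is extremely terse (two sentences: the classifier does not change on $B_p(x,d_B(x))$, hence Case~1; Case~2 is just Lemma~\ref{le:bd-decision}), and you have simply spelled out the details behind the first assertion using the half-space description and the hyperplane-distance formula. The additional remarks on box constraints and the boundary case $\norm{\delta}_p=d_B(x)$ are fine but not needed for the argument as stated.
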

\begin{proof}
If $d_B(x)\leq d_D(x)$, then the ReLU classifier does not change on $B_p(x,d_B(x))$ and thus $d_B(x)$ is a lower bound on the minimal $l_p$-norm
perturbation necessary to change the class. The second statement follows directly by Lemma \ref{le:bd-decision}.
\end{proof}

\begin{figure}[t]\centering\includegraphics[scale=0.28]{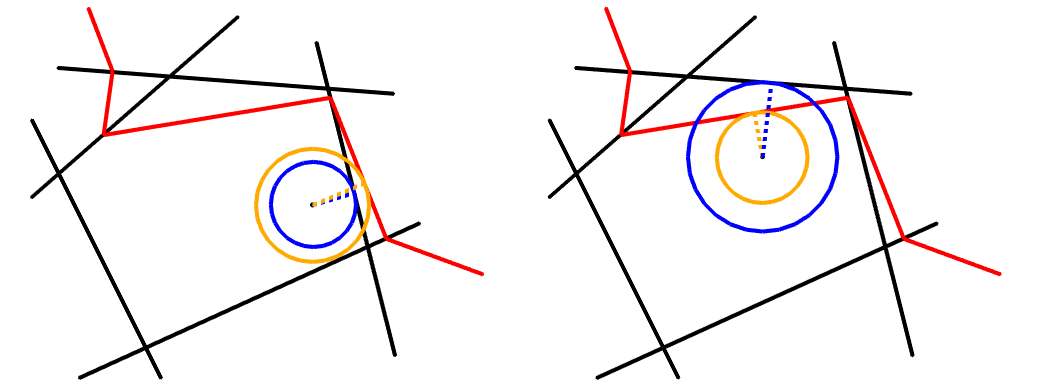}
	\caption{\textbf{Left}: the input $x$ is closer to the boundary of the polytope $Q(x)$ (black) than to the decision boundary (red). In this case the smallest perturbation that leads to a change of the decision lies outside the linear region $Q(x)$. \textbf{Right}: the input $x$  is closer to the decision boundary than to the boundary of $Q(x)$, so that the projection of the point onto the decision hyperplane provides the adversarial example with the smallest norm.}\label{fig:illustration}
\end{figure}

In Figure \ref{fig:illustration} we illustrate the different cases for $p=2$. On the left hand side $d_B(x)<d_D(x)$ and thus we get that  on the ball $B_2(x,d_B(x))$ the decision does not change, whereas in the rightmost plot we have $d_D(x)<d_B(x)$ and thus we obtain the minimum distance to the decision boundary.
Using Theorem \ref{th:main} we can provide robustness guarantees for every point and for some even compute the optimal robustness guarantees. Finally, we describe in the appendix (Section \ref{subsection:check_neighb_regions}) how one can improve the lower bounds by checking neighboring regions of $Q(x)$. Compared to the bounds of \cite{WonKol2018, WengEtAl2018} ours are slightly worse (see Table \ref{tab:our_bounds2} in the appendix). However, our bounds can be directly maximized and have a clear geometrical meaning and thus motivate directly
a regularization scheme for ReLU networks which we propose in the next section.

\section{Large margin and region boundary regularization}\label{sec:reg}

Using the results of Section \ref{sec:guar}, a classifier with guaranteed robustness requires large distance to the boundaries of the linear region as well
as to the decision boundary. Even the optimal guarantee (solution of \eqref{eq:advopt}) can be obtained in some cases. Unfortunately, as illustrated in Figures
\ref{fig:te_orig} and \ref{fig:margin_orig} for simple one hidden layer networks, the linear regions $Q(x)$ are small for normally-trained networks and thus no meaningful guarantees can be obtained. Thus we propose a new regularization scheme which simultaneously aims for each training point to achieve large distance to the boundary
of the linear region it lies in, as well as to the decision boundary. Using Theorem \ref{th:main} this directly leads to non-trivial robustness guarantees
of the resulting classifier. 

However, note that just maximizing the distance to the decision boundary might be misleading during training as this does not discriminate between points which are 
correctly (correct side of the decision hyperplane) or wrongly classified (wrong side of the decision hyperplane). Thus we introduce the signed version of $d_D(x)$, where $y$ is the true label of the point $x$, \begin{equation} \overline{d_D}(x) = \minop_{\stackrel{s=1,\ldots,K}{s\neq y}} \frac{f_y^{(L+1)}(x)-f_s^{(L+1)}(x)}{\norm{V^{(L+1)}_y -V^{(L+1)}_s}_q}. \end{equation} 
Please note that if $\overline{d_D}(x)\geq 0$, then $x$ is correctly classified, whereas $\overline{d_D}(x)<0$ if $x$
is wrongly classified. If $|\overline{d_D}(x)|\leq d_B(x)$, then it follows from Lemma \ref{le:bd-decision} that $|\overline{d_D}(x)|$ is the distance to the decision hyperplane.
If $|\overline{d_D}(x)|>d_B(x)$ this does not need to be any longer true, but $|\overline{d_D}(x)|$ is at least a good proxy as $\norm{V^{(L+1)}_y -V^{(L+1)}_s}_q$ is an estimate of the local
cross Lipschitz constant \cite{HeiAnd2017}. Finally, we propose to use the following regularization scheme:
\begin{definition}\label{def:MMR} Let $x$ be a point of the training set and define the Maximum Margin Regularizer ($MMR$) for $x$, for some $\gamma_B,\gamma_D\in\mathbb{R}_{++}$, as \begin{equation} MMR(x)= \max\Big(0,1-\frac{d_B(x)}{\gamma_B}\Big)+ \max\Big(0,1-\frac{\overline{d_D}(x)}{\gamma_D}\Big). \label{eq:regularizer}	\end{equation} \label{def:reg_single_point} \end{definition} 
The MMR penalizes distances to the boundary of the polytope if $d_B(x)\leq \gamma_B$ and positive distances ($x$ is correctly classified)
if $d_D(x)\leq \gamma_D$. Notice that wrongly classified points are always penalized. The part of the regularizer corresponding to $d_D(x)$ has been suggested in
\cite{ElsEtAl2018} in a similar form as a loss function for general neural networks without motivation from robustness guarantees. They have an additional
loss penalizing difference in the outputs with respect to changes at the hidden layers which is completely different from our geometrically motivated regularizer
penalizing the distance to the boundary of the linear region. The choice of $\gamma_B,\gamma_D$ allows different trade-off between the terms.
In particular $\gamma_D<\gamma_B$ (stronger maximization of $d_B(x)$) leads to more points for which the optimal robustness guarantee (case $d_D(x)\leq d_B(x)$) can be proved.
\\
For practical reasons, we also propose a variation of our MMR regularizer in \eqref{eq:regularizer}:
\begin{equation} \begin{split} kMMR(x)= &\frac{1}{k_B}\sum_{i=1}^{k_B}\max\Big(0,1-\frac{d_B^{i}(x)}{\gamma_B}\Big)\\ &+ \frac{1}{k_D}\sum_{i=1}^{k_D}\max\Big(0,1-\frac{\overline{d_D^{i}}(x)}{\gamma_D}\Big), \label{eq:regularizer_bottom_k} \end{split} \end{equation}
where $d_B^{i}(x)$ is the distance of $x$ to the $i$-th closest hyperplane of the boundary of the polytope and $d_D^{i}(x)$ is the analogue for the decision boundaries. Basically, we are optimizing, instead of the closest decision hyperplane, the $k_D$-closest ones and analogously the $k_B$-closest hyperplanes
defining the linear region $Q(x)$ of $x$. This speeds up the training time as more hyperplanes are moved in each update. Moreover, when deriving lower bounds using more than one linear
region, one needs to consider more than just the closest boundary hyperplane. Finally, many state-of-the-art schemes for proving lower bounds \cite{TjeTed2017,WonKol2018,WengEtAl2018} work well only if 
the activation status of most neurons is constant for small changes of the points. This basically amounts to ensure that all the hyperplanes are sufficiently far away, which is exactly what our regularization scheme is aiming at. Thus our regularization scheme also helps to achieve state-of-the-art provable robustness with other certification methods (\cite{TjeTed2017,WonKol2018}). This is also the reason why the term pushing the polytope boundaries away is essential. Just penalizing the distance to the decision boundary is
not sufficient to prove good lower bounds on the minimal distance to the decision boundary as we will show in our experiments (Table \ref{tab:MMR2}). Compared to the regularization scheme in \cite{WonKol2018} using a dual feasible point
of the robust loss, our approach has a direct geometric interpretation and allows to derive the exact minimal perturbation for some fraction of the test points varying from dataset
to dataset but it can be as high as $99\%$. In practice, we gradually decrease $k_B$ and $k_D$ in \eqref{eq:regularizer_bottom_k} during training
so that only the closest hyperplanes of each training point influence the regularizer.

Thus, denoting the cross entropy loss $CE(f(x),y)$, the final objective of our models is
\begin{equation} \frac{1}{n}\sum_{i=1}^{n}CE(f(x_i),y_i) +\lambda\, kMMR(x_i), \label{eq:obj}\end{equation} 
where $(x_i,y_i)_{i=1}^n$ is the training data and $\lambda\in\mathbb{R}_+$ the regularization parameter. 
Figure \ref{fig:plot_margin} shows the effect of the regularizer. Compared to the unregularized case the size of the linear regions is significantly increased.
\begin{figure}[t!] \centering \subfloat[]{\includegraphics[height=0.2\textwidth]{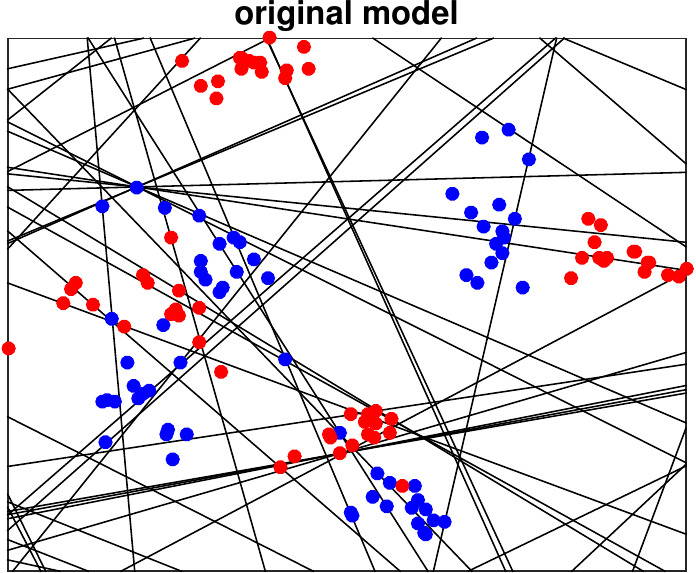}\label{fig:te_orig}}
	\subfloat[]{\includegraphics[height=0.2\textwidth]{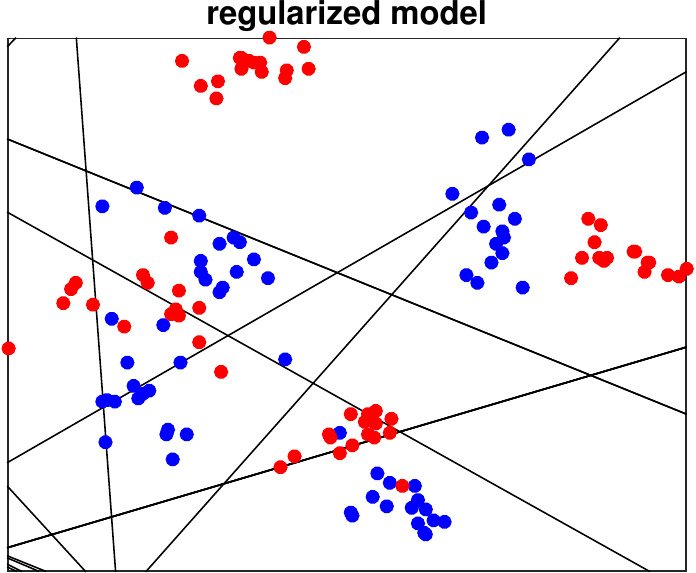}\label{fig:te_reg}}
	\vskip\baselineskip
	\vspace{-4mm}
	\subfloat[]{\includegraphics[height=0.2\textwidth]{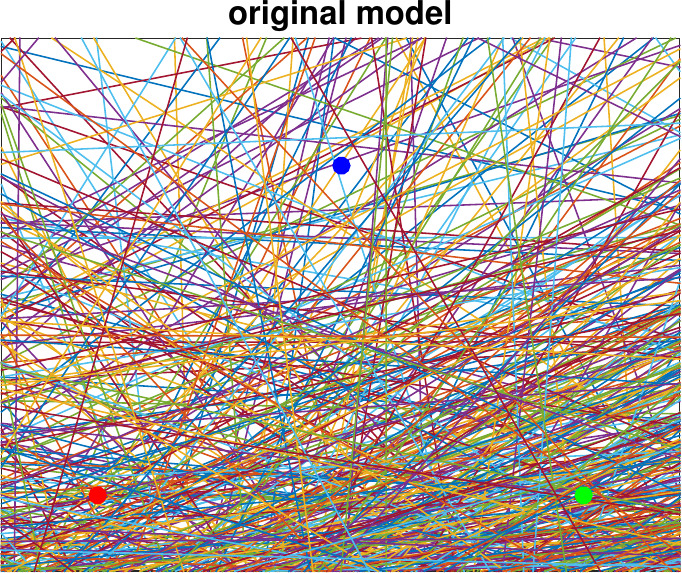}\label{fig:margin_orig}}
	\subfloat[]{\includegraphics[height=0.2\textwidth]{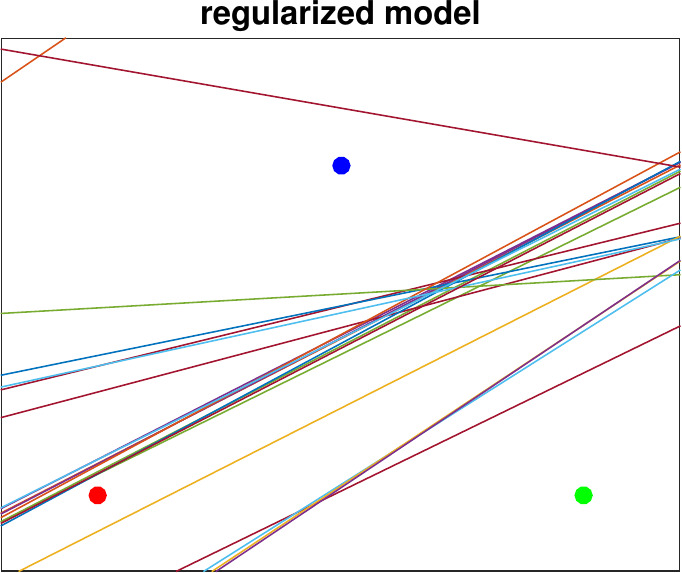}\label{fig:margin_reg}}
	\caption{\textbf{The effects of MMR}. \textit{Top row}: we train two networks with one hidden layer, 100 units, on 128 points belonging to two classes (red and blue). Figure \ref{fig:te_orig} shows the points and how the input space is divided in regions on which the classifier is linear. Figure \ref{fig:te_reg} is the analogue for our MMR regularized model. \textit{Bottom row}: we show region boundaries (one hidden layer, 1024 units)
on a 2D slice of $\R^{784}$ spanned by three random points from different classes of the MNIST training set. We observe a clear maximization of the linear regions for the MMR-regularized case (Figure \ref{fig:margin_reg}) versus the non-regularized case (Figure \ref{fig:margin_orig}).}\label{fig:plot_margin}
\end{figure}

\section{Experiments}\label{sec:exp}
\begin{table*}[t]
	\centering
	\caption{\textbf{Comparison of different methods regarding robustness.} We here report the statistics of 6 training schemes: \textit{plain} (usual training), \textit{at} (adversarial training \cite{MadEtAl2018}), KW \cite{WonKol2018}, Xiao et al \cite{XiaoEtAl18},  MMR (ours), and MMR+\textit{at} (MMR plus adversarial training). We show, in percentage, test error (TE), lower (LB) and upper (UB) bounds on the robust test error at the threshold $\epsilon$ indicated for each dataset. The robustness statistics are computed on the first 1000 points of the respective test sets for $l_\infty$-robustness, on the full test set for $l_2$-robustness. 
	KW models marked with * are taken from \cite{WonKol2018,WonEtAl18}, while the other are retrained according to the available code (but the bounds are again evaluated with the discussed combination of multiple techniques). 
	The results and bounds of Xiao et al are taken directly as reported in \cite{XiaoEtAl18}.
	The model indicated by $^1$ has been obtained at epoch 50 instead of 100 due to optimization issues.
	}
	\begin{tabular}{l || r r r | r r r || r r r | r r r }
		\multicolumn{1}{l}{} & \multicolumn{6}{c}{\textbf{$l_\infty$-norm robustness}} & \multicolumn{6}{c}{\textbf{$l_2$-norm robustness}}\\ [0.5mm]
		\textit{training}& \multicolumn{3}{c|}{FC1} & \multicolumn{3}{c||}{CNN} & \multicolumn{3}{c|}{FC1} & \multicolumn{3}{c}{CNN} \\
		\cline{2-13}
		\textit{scheme}& TE & LB & UB & TE & LB & UB & TE & LB & UB & TE & LB & UB \\
		\hline
		\multicolumn{13}{l}{}\\
		\multicolumn{1}{l}{\textbf{MNIST}}& \multicolumn{6}{c||}{$\epsilon=0.1$} & \multicolumn{6}{c}{$\epsilon=0.3$}\\
		\hline
		plain & 1.44 & 93.0  & 100 & 0.91  & 74.0 & 100  &1.73 & 9.7 & 66.3& 0.85 & 3.1 & 100 \\
		at & 0.92 & 10.0  & 99.0 & 0.82 & 3.0 & 100  & 1.15 &  2.6 & 16.9 & 0.87 & 1.8 & 100 \\
		KW & 3.19 & 10.9 & 10.9 & 1.26* & 4.4  & 4.4 &1.19 & 2.4 & 5.2 & 1.11 & 2.2  & 6.0 \\
		Xiao et al & - & - & - & 1.32 & 4.9 & 5.7 & - & - & - & - & - & - \\
		\cdashline{1-13}
		MMR & 2.11 & 22.5 & 24.9 & 1.65 &  6.0 &  6.0  & 2.40  & 5.9 & 8.8 & 2.57 & 5.8 &  11.6 \\
		MMR+at & 2.04 &  14.0 &  14.1 & 1.19 &3.6 & 3.6  & 1.77 & 3.8 & 6.4 & $2.12^1$ & 4.6 & 9.7\\
		\hline
		
		\multicolumn{13}{l}{}\\
		\multicolumn{1}{l}{\textbf{F-MNIST}}&\multicolumn{6}{c||}{$\epsilon=0.1$} & \multicolumn{6}{c}{$\epsilon=0.3$}\\
		\hline
		plain & 9.49 & 100  & 100 & 9.50 & 96.0  & 100 & 9.70 & 42.8 & 91.8 & 9.32 & 57.1  & 100  \\
		at & 11.69 & 29.5  & 95.5 & 11.54 & 21.5 & 73.0  & 9.15 & 19.9 & 61.4 & 8.10 & 20.4 & 100 \\
		KW & 21.31 & 32.8 & 32.8 & 21.73* & 32.4 & 32.4  & 11.24 & 17.2 & 22.7 & 13.08 & 18.5 & 21.7 \\
		\cdashline{1-13}
		MMR & 18.11 & 37.6   & 42.0 & 14.52 & 33.2 & 33.6 & 13.28  & 25.0 & 28.0 & 12.85 & 25.4 & 35.3 \\
		MMR+at & 15.84  & 31.3 & 34.8 & 14.50 & 26.6 & 30.7 & 12.12 & 19.7 & 23.4 & 13.42 & 26.2 &39.1 \\
		\hline
		
		\multicolumn{13}{l}{}\\
		\multicolumn{1}{l}{\textbf{GTS}}&\multicolumn{6}{c||}{$\epsilon=\nicefrac{4}{255}$} & \multicolumn{6}{c}{$\epsilon=0.2$}\\
		\hline
		plain & 12.72 & 61.0 & 77.0 & 7.11  & 63.0 & 99.5 & 12.24  &37.5 & 44.7 & 6.78 & 33.3 & 99.2 \\
		at & 9.33 & 34.5  & 48.5 &  6.84 & 29.5& 81.0 & 13.55 & 33.1 & 43.2 & 8.75 & 23.8 & 98.6 \\
		KW & 13.99 & 33.0 & 33.0 & 15.56 & 36.1 & 36.6 & 16.84 & 16.9 & 31.2 & 14.33  & 28.7  & 34.7 \\ 
		\cdashline{1-13}
		MMR & 14.29 & 39.8 & 39.8 & 13.31 & 49.5 & 49.6  & 14.55 & 33.2 & 34.7 & 14.22 & 36.2 & 36.9 \\
		MMR+at & 13.10 & 33.1 & 35.4 & 14.88 & 38.3 &  38.4 & 13.94 & 29.7 & 32.1 & 15.35 & 32.1 & 33.2 \\
		\hline
		
		\multicolumn{13}{l}{}\\
		\multicolumn{1}{l}{\textbf{CIFAR-10}}&\multicolumn{6}{c||}{$\epsilon=\nicefrac{2}{255}$} & \multicolumn{6}{c}{$\epsilon=0.1$}\\
		\hline
		plain & - & - & - & 24.63 & 91.0 & 100 & - & - & - & 23.31 & 47.2 & 100 \\
		at & - & - & - & 27.04 & 52.5 & 88.5  & - & - & - & 25.82 & 35.8 & 100 \\
		KW & - & - & - & 38.91* & 46.6 & 48.0 & - & - & - & 40.24 & 43.9 & 49.0 \\
		Xiao et al & - & - & - & 38.88 & 50.1 & 54.1 & - & - & - & - & - & - \\
		\cdashline{1-13}
		MMR & - & - & - & 34.61 & 57.5 & 61.0 & - & - & - & 40.92 & 50.6 & 57.1 \\
		MMR+at & - & - & - & 35.38 & 47.9 & 54.2 & - & - & - & 37.75 & 43.9 & 53.3\\
		\hline
		
		\hline
	\end{tabular}
	\label{tab:main_exps4_2}
\end{table*}

\ifnotinapp
\begin{table*}[p]
	\centering
	\caption{\textbf{Comparison of different methods regarding robustness wrt $l_\infty$-norm.} We here report the statistics of 5 different training schemes: \textit{plain} (usual training), \textit{at} (adversarial training \cite{MadEtAl2018}), MMR (ours), MMR+\textit{at} (MMR plus adversarial training) and KW (the robust training introduced in \cite{WonKol2018}). We show test error (TE), average of lower (LB) and upper (UB) bounds on the robustness $\norm{\delta}_2$, where $\delta$ is the solution of \eqref{eq:advopt}. The robustness statistics are computed on the first 1000 points of the respective test sets (including misclassified images) against all the possible target classes.}
	\begin{tabular}{l l l | r r r r | r r r r}
		\multicolumn{11}{c}{\textbf{$l_\infty$-norm robustness}}\\ [4mm]
		\multirow{2}{*}{\textit{dataset}}&\multirow{2}{*}{$\epsilon$}&\textit{training}& \multicolumn{4}{c|}{FC1} & \multicolumn{4}{c}{CNN} \\
		\cline{4-11}
		& &\textit{scheme} & TE & LB & UB & gap  & TE & LB & UB & gap\\
		\hline
		\multirow{5}{*}{MNIST}& \multirow{5}{*}{0.1}&plain & &  & &  & 1.07  & &  & \\
		& &at & &  & & & &  &  & \\
		& &KW & 3.22 & 10.2* &16.8* & &  & 1.26 &  & 4.48* \\
		\cdashline{3-11}
		& &MMR & &  & & & &  &  & \\
		& &MMR+at & 2.04 & 14.0 & 14.1 & &   & 1.19 & 3.6 & 3.6 \\
		\hline
		\multirow{5}{*}{GTS}& \multirow{5}{*}{\nicefrac{4}{255}} & plain & &  & & &  & 6.71 & & \\
		& &at & &  & & & & & & \\
		& &KW & &  & & &   &  & & \\
		\cdashline{3-11}
		& &MMR & &  & & & &  &  & \\
		& &MMR+at & &  & & & &  &  &\\
		\hline
		\multirow{5}{*}{F-MNIST}&\multirow{5}{*}{0.1} &plain & &  & & & &  &  &  \\
		& &at & &  & & & &  &  &  \\
		& &KW & &  & & &   & 21.73 & 31.6* & 34.5* \\
		\cdashline{3-11}
		& &MMR & &  & & & &  &  &  \\
		& &MMR+at & &  & & & &   14.5 & 24.7 & 30.7 \\
		\hline
		\multirow{5}{*}{CIFAR-10}&\multirow{5}{*}{\nicefrac{2}{255}} &plain & &  & & & &   & &  \\
		& &at & &  & & & &  &  & \\
		& &KW & &  & & &  & 38.91 & & 52.75* \\
		\cdashline{3-11}
		& &MMR & &  & & &  &  & & \\
		& &MMR+at & &  & & &  & 35.38 & 47.9 & 54.2 \\
		\hline
	\end{tabular}
	\label{tab:main_exps4_linf2}
\end{table*}
\fi

\begin{figure}[t]
	\centering
	\includegraphics[width=0.49\columnwidth]{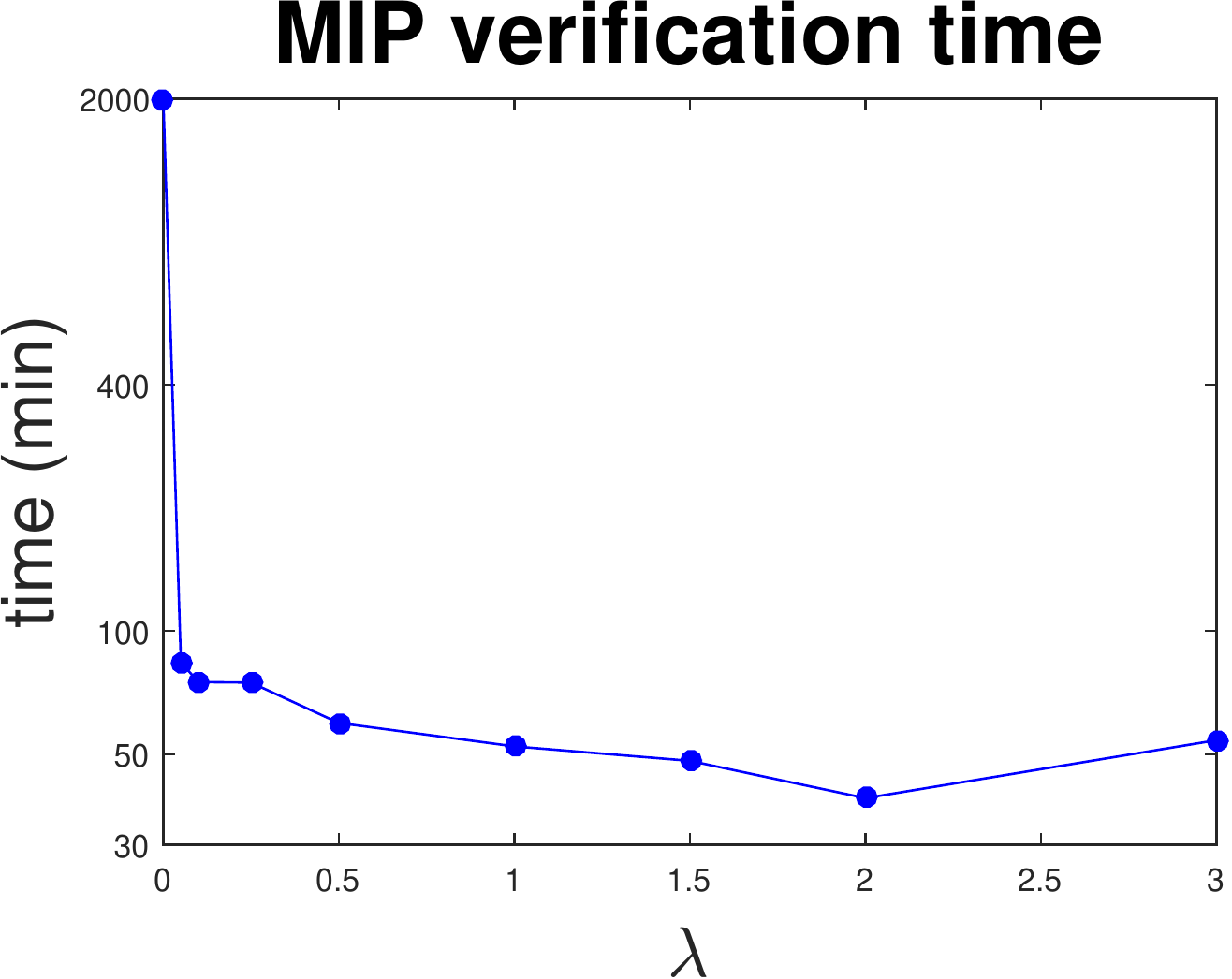}
	\includegraphics[width=0.49\columnwidth]{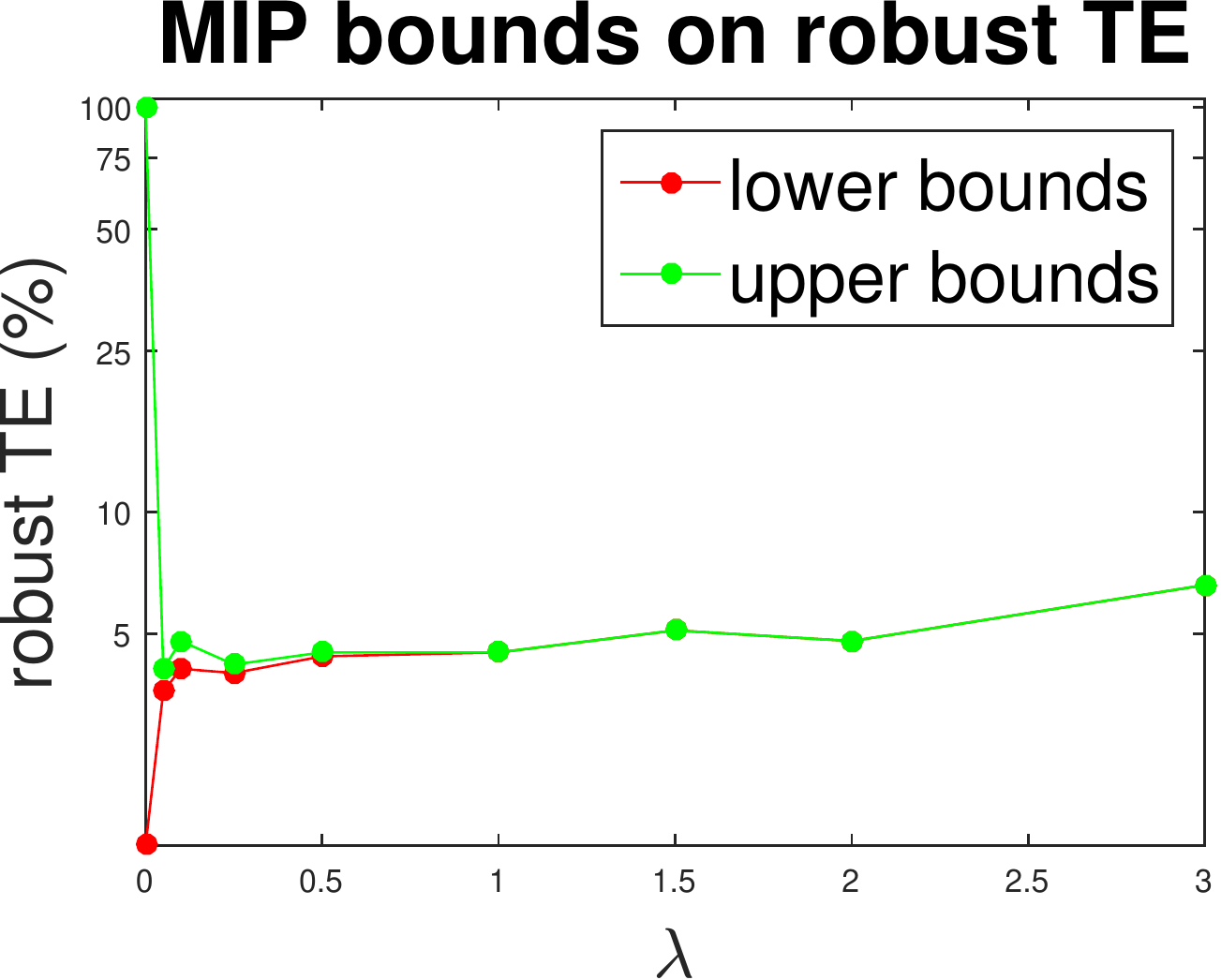}
	\caption{\textbf{Verifiability of models.} We show the runtime (left) in minutes that MIP \cite{TjeTed2017} takes to verify 1000 points, setting a timeout of 120s (that is the mixed-integer optimization stops anyway after the time limit is reached), with models trained with different values of $\lambda$ (see Equation \eqref{eq:obj}). Note that a logarithmic scale is used on the $y$-axis. Moreover, we report (right) lower (red) and upper (green) bounds on robust test error. The \textit{plain} model, trained without MMR ($\lambda=0$) needs 51 times more to be verified, with only 1\% of the points certified. Conversely, even with a light MMR regularization lower and upper bounds are tight.}\label{fig:MIP_stats}
\end{figure}

\subsection{Main experiments}
We provide a variety of experiments aiming to show the state-of-the-art performance of MMR to achieve provably robust classifiers. We make our code and models publicly available\footnote{\url{https://github.com/max-andr/provable-robustness-max-linear-regions}}.
We use four datasets: MNIST, German Traffic Signs (GTS) \cite{GTSB2012}, Fashion MNIST \cite{XiaoEtAl2017} and CIFAR-10. We consider in the paper robustness wrt to both $l_2$ and $l_\infty$ distances. We use two criteria.
First, upper and lower bounds on the robust test error for a given threshold $\epsilon$, that is the largest achievable test error if every test input can be modified with a perturbation $\delta$ such that $\norm{\delta}_p\leq\epsilon$ and $\delta$ is chosen so that $x+\delta$ is classified wrongly. Second, we show in Section \ref{sec:further_exp} and in the appendix results for lower 
and upper bounds on the minimal $\norm{\delta}_p$ from \eqref{eq:advopt}.

Lower bounds on the robust test error for $l_2$ and $l_\infty$ are computed using the attack via Projected Gradient Descent (PGD) \cite{MadEtAl2018}. Upper bounds on the robust
test error are computed using the approach of \cite{WonKol2018}. Additionally, \cite{TjeTed2017} yields upper and lower bounds on the robust test error via mixed integer programming (MIP). We use the solver for the MIP \cite{TjeTed2017} with a timeout of 120s per point, that is if the point has not been verified in this time the solver is stopped. 
This technique is currently effective for $l_\infty$ but not for $l_2$, where basically in almost every case the timeout is reached and thus we discard for $l_2$ the MIP evaluation.
Finally, we combine the upper bounds on the robust test error found by \cite{WonKol2018} and \cite{TjeTed2017} by counting the fraction of points that can be certified with at least one of the two methods. We compare our own guarantees from Theorem \ref{th:main} with the ones of \cite{WonKol2018} in the appendix. Moreover, we also combine the lower bounds on the robust test error found by PGD, the MIP and the attack of \cite{CroHei18,CroRauHei2019}.

We compare six methods: plain training, adversarial training of \cite{MadEtAl2018} which has been shown to significantly increase robustness, the robust loss of \cite{WonKol2018} which supports both $l_2$ and $l_\infty$ norms (denoted as KW), the training scheme of \cite{XiaoEtAl18} (denoted as Xiao et al), our regularization scheme MMR and MMR together with adversarial training again as in \cite{MadEtAl2018}.
All schemes are evaluated on a one hidden layer fully connected network with 1024 hidden units (FC1) and a convolutional network (CNN) with 2 convolutional and 2 dense layers as used in \cite{WonKol2018}. For more details see Appendix\ifpaper\ \ref{app:main_exp}\fi. Note that since code for \cite{XiaoEtAl18} is not available, we just show the $l_\infty$ results from their paper (evaluated on full test sets with PGD lower bounds, and MIP upper bounds), which are available only for MNIST and CIFAR-10 for the same CNN architecture.

\textbf{Improvement of robustness:} 
The results can be found in Table \ref{tab:main_exps4_2}. For CIFAR-10 we show only CNN models as fully connected networks do not have good test performance.
We report clean test error, lower and upper bounds on robust test error at the threshold indicated in Table \ref{tab:main_exps4_2}, computed on 1000 points for $l_\infty$, and on the full test sets for $l_2$ (as we do not do the MIP evaluation there).\\
Both KW and MMR+at achieve similar performance regarding lower and upper bounds on the robust test error. For $l_\infty$ our MMR+at achieves overall better performance than KW, 
sometimes with significantly better clean test error like on F-MNIST. In some cases, e.g. on CIFAR-10, MMR+at provides slightly worse upper bounds, but instead preserves better test error. \\
For $l_2$ KW performs better than MMR+at with the exception of GTS. 
This is to be expected as we use for the evaluation of the upper bounds on the robust test error the approach of \cite{WonKol2018} which are directly optimized by their robust training
procedure. 
Note that both KW and MMR/MMR+at outperform by large margin plain and adversarial training regarding provable robustness (upper bounds on the robust test error). With the exception of the CNN-$l_2$ models
for MNIST and F-MNIST, the upper bound on robust test error of MMR+at is smaller than the lower bound of the plain model. Thus our models are provably better than the plain models regarding robust test error.  Moreover, regarding robustness wrt to $l_\infty$ the gaps between lower and upper bounds are often very small for KW and MMR/MMR+at showing that both techniques lead to models which are easier to check via the MIP which we discuss next in more detail.

\textbf{Enhancing verifiability:} A key aspect of any robust training should be the ability of producing models both resistant to adversarial manipulation and being \textit{verifiable}, in the sense of having guarantees on the minimal perturbation changing the decision for a test input. In fact, even if empirically model seems to be robust, only computing certificates allows to completely trust it. In our experiments, we could use successfully the method of \cite{WonKol2018} to get meaningful guarantees, which so far, as noted in \cite{RagSteLia2018}, could be achieved only on models provided by the specific training of \cite{WonKol2018}. Moreover, the MIP is too slow to run on standard models, so that ad hoc techniques have been developed to train classifiers verifiable by MIP \cite{XiaoEtAl18}. Our MMR training produces also models which can be checked by MIP. This is due to the fact that the hyperplanes representing the boundary of the polytope are actually the boundary between the different regimes (identity or zero function) of ReLU units, so that pushing the hyperplanes implies that many inputs of ReLU units have constant sign in a wide region around the data points (and having ReLU units with unstable signs is the main slowdown factor for the MIP solver).\\
In this context, \cite{XiaoEtAl18} develop a specific technique aiming at inducing stability of ReLU signs.
Therefore, we provide a comparison of MMR to the ReLU-stability loss of \cite{XiaoEtAl18} in Table \ref{tab:main_exps4_2}. On MNIST our MMR+at model of the same CNN architecture has much tighter upper bounds on the robust error -- 3.6\% compared to 5.7\% of Xiao et al. Moreover, we have no gap between the lower and upper bounds on the robust test error, and our upper bounds are \textit{lower} than the lower bounds of Xiao et al. This suggests that our training scheme is better at both improving robustness and enhancing verifiability than the approach of Xiao et al \cite{XiaoEtAl18}. Additionally, on CIFAR-10 we have similar upper bounds, but better test error and better lower bounds. \\
To illustrate the speedup in verification time for MMR models, we show in Figure \ref{fig:MIP_stats} the time MIP needs to run on 1000 points (with timeout of 120s per point) and lower and upper bounds on robust test error wrt $l_\infty$-distance at $\epsilon=0.1$. We use CNNs trained on MNIST with $\gamma_B=\gamma_D=0.15$ and different values of $\lambda$ representing the weight of our regularizer in the loss \eqref{eq:obj} (for $\lambda=0$ one gets the plain model). It is clear that MIP performs poorly both in runtime (almost 2000 minutes) and performance (LB=1\%, UB=100\%) on the plain model. In contrast, the MMR models are verified quickly (between 35 and 79 minutes) and almost completely (the rate of certified points is between 99.3\% and 100\%), that is lower and upper bounds are close or even equal. Please note that both statistics improve with increasing $\lambda$ up to 2, while runtime gets worse with a larger value (as at $\lambda=3$ the classifier becomes less robust and then requires a higher computational effort to be certified).

\ifnotinapp
\begin{table*}[t!]
	\centering
	\caption{\textbf{Comparison of different methods regarding robustness wrt $l_2$-norm.} We here report the statistics of 5 different training schemes: \textit{plain} (usual training), \textit{at} (adversarial training \cite{MadEtAl2018}), MMR (ours), MMR+\textit{at} (MMR plus adversarial training) and KW (the robust training introduced in \cite{WonKol2018}). We show test error (TE), average of lower (LB) and upper (UB) bounds on the robustness $\norm{\delta}_2$, where $\delta$ is the solution of \eqref{eq:advopt}. The robustness statistics are computed on the first 1000 points of the respective test sets (including misclassified images) against all the possible target classes.}
	\begin{tabular}{l l | r r r | r r r | r r r}
		\multicolumn{11}{c}{\textbf{$l_2$-norm robustness}}\\ [4mm]
		\multirow{2}{*}{\textit{dataset}}&\textit{training}& \multicolumn{3}{c|}{FC1} & \multicolumn{3}{c|}{FC10}& \multicolumn{3}{c}{CNN} \\
		\cline{3-11}
		&\textit{scheme} & TE(\%) & LB & UB & TE(\%) & LB & UB & TE(\%) & LB & UB\\
		\hline
		\multirow{5}{*}{MNIST}&plain &1.59 & 0.34 &0.98 &1.81 & 0.13 & 0.70 & 0.97 & 0.04 & 1.03 \\
		&at & 1.29 & 0.25 & 1.23 & 0.93 & 0.14 & 1.59 & 0.86 & 0.14 & 1.67\\
		&KW & 1.37 & \textbf{0.70} & \textbf{1.75} & 1.69 & \textbf{0.75} & \textbf{1.74} & 1.04 & 0.32 & 1.84 \\
		\cdashline{2-11}
		&MMR & 1.51 & 0.69 & 1.69 & 1.87 & 0.48 & 1.48 &1.17 & \textbf{0.38} & 1.70 \\
		&MMR+at & 1.59 & \textbf{0.70} & 1.70 & 1.35 & 0.40 & 1.60 & 1.14 & \textbf{0.38} & \textbf{1.86} \\
		\hline
		\multirow{5}{*}{GTS}&plain & 12.24& 0.33 & 0.57 &11.25 & 0.08 & 0.48 & 6.73 & 0.06 & 0.43 \\
		&at & 13.55 &0.34 &0.66 & 13.01 & 0.10 & 0.56 & 8.12 & 0.06 &0.53 \\
		&KW & 13.06 &0.35 &0.63 & 13.56 & 0.16 & 0.52 & 8.44 & \textbf{0.11} & 0.52 \\
		\cdashline{2-11}
		&MMR & 11.15& 0.69 & 0.69  & 12.82 & \textbf{0.64} & \textbf{0.67} & 7.40 & 0.09 & 0.59 \\
		&MMR+at & 11.72 &\textbf{0.72} &\textbf{0.72} & 13.36 & \textbf{0.64} & 0.66 & 10.50 & \textbf{0.11} & \textbf{0.62}\\
		\hline
		\multirow{5}{*}{F-MNIST}&plain & 9.61 & 0.18 & 0.53 & 10.53 & 0.05 & 0.44 &8.86 &0.03 & 0.32 \\
		&at & 9.89 & 0.11 & 1.00& 9.89 & 0.11 & 1.00 & 8.77 & 0.07 &0.80 \\
		&KW & 9.95 & 0.46 & 1.11 & 11.42 & 0.47 & 1.22 & 10.37 & 0.17 & 0.96 \\
		\cdashline{2-11}
		&MMR & 10.22 & 0.50 & 0.85 & 11.73 &\textbf{0.68}& 1.18 & 10.30 & 0.17 & 0.88\\
		&MMR+at & 10.94& \textbf{0.66}&\textbf{1.45} & 11.39 & 0.67 & \textbf{1.24} & 10.48 & \textbf{0.21} & \textbf{1.14}\\
		\hline
	\end{tabular}
	\label{tab:main_exps4_v1}
\end{table*}
\fi

\ifnotinapp
\begin{table}[t]
	\centering
	\caption{\textbf{Comparison of different training schemes wrt $l_2$-norm.} We here report the statistics relative to models trained with 5 different training scheme: \textit{plain} (usual training), \textit{at} (adversarial training \cite{MadEtAl2018}), MMR (ours), MMR+\textit{at} (the two methods combined) and KW (the robust training introduced in \cite{WonKol2018}). We show test error, average of lower and upper bounds on $\norm{\delta}_2$, where $\delta$ is the solution of problem \eqref{eq:advopt}. The statistics are computed on the first 1000 points of the test sets (including misclassified images) against all the possible target classes.}
	\begin{tabular}{ l |  r r r}
		\multicolumn{4}{c}{\textbf{$l_2$-norm robustness on CIFAR-10}}\\[4mm]
		\textit{training}& \multicolumn{3}{c}{CNN} \\
		\cline{2-4}
		\textit{scheme} & TE(\%) & LB & UB\\
		\hline
		plain & 25.98 & 0.02 & 0.16 \\
		at & 25.36 & 0.04 & 0.42\\
		KW & 41.52 & \textbf{0.16} & \textbf{0.66} \\
		\cdashline{1-4}
		MMR & 41.86 & \textbf{0.16} & 0.39 \\
		MMR+at & 41.11 & 0.13 & 0.57 \\
		\hline
	\end{tabular}
	\label{tab:bounds_cifar10}
\end{table}
\fi

\begin{table*}[t]
	\centering
	\caption{\textbf{Full version of MMR is necessary.} We compare the statistics of models trained with the full version of MMR as in \eqref{eq:regularizer} and \eqref{eq:regularizer_bottom_k} (left) and with only the second part penalizing the distance to the decision boundary (right). While the test error is for the full test set, the lower and upper bounds on the $l_2$-norm of the optimal adversarial manipulation are compared on the first 1000 points of the test set. One can clearly see that the lower bounds improve significantly when one uses the full MMR regularization.}
	\begin{tabular}{c l || c c c| c c c }
		\multicolumn{2}{c}{} &\multicolumn{3}{c|}{MMR-\textit{full}}& \multicolumn{3}{c}{MMR-$d_D$}\\
		dataset & model & test error & lower bounds & upper bounds & test error & lower bounds & upper bounds \\
		\hline
		\multirow{2}{*}{MNIST}& FC1 & 1.51\% & \textbf{0.69} & \textbf{1.69} & 0.93\% & 0.35 & \textbf{1.69} \\
		& FC10& 1.87\% & \textbf{0.48} & 1.48 & 1.21\% & 0.20 & \textbf{1.62} \\
		\hline
		\multirow{2}{*}{GTS}& FC1 & 11.15\% & \textbf{0.69} & \textbf{0.69} & 12.09\% & 0.48 & 0.63 \\
		& FC10& 12.82\% & \textbf{0.64} & \textbf{0.67} & 12.41\% & 0.12 & 0.48 \\		
		\hline
		\multirow{2}{*}{F-MNIST}& FC1 & 10.22\% & \textbf{0.50} & 0.85 & 9.83\% & 0.31 & \textbf{1.30} \\
		& FC10&11.73\% & \textbf{0.68} & \textbf{1.18} & 10.32\% & 0.13 & 1.15 
	\end{tabular}
	\label{tab:MMR2}
\end{table*}

\begin{table*}[t]
	\centering
		\caption{\textbf{Occurrence of guaranteed optimal solutions.} For each dataset and architecture we report the percentage of points of the test set for which we can compute the guaranteed optimal solution of \eqref{eq:advopt} for models trained without (plain setting) and with MMR regularization. We show the test error of the models as well. In most of the fully connected cases, we achieve certified optimal solutions for a significant fraction of the points without degrading significantly, or sometimes improving, the test error. Moreover, where we have a meaningful number of points with provable minimal perturbation, it is interesting to check how much worse the lower bounds computed by \cite{WonKol2018} (\textit{LB}) are. Thus the column \textit{opt vs LB} indicates, in percentage, how much larger the $l_2$-norm of the optimal solution of \eqref{eq:advopt} is compared to its lower bound, explicitly $\Big(\nicefrac{\norm{\delta_{opt}}_2}{LB}-1\Big)\times 100\%$.}
	\begin{tabular}{c l || c c c| c c }
		\multicolumn{2}{c}{} &\multicolumn{3}{c|}{MMR training}& \multicolumn{2}{c}{plain training}\\
		dataset & model & test error & optimal points & opt vs LB &  test error & optimal points \\
		\hline
		\multirow{2}{*}{MNIST}& FC1 & 1.51\% & 0.20\% & 14.11\% & 1.59\% & 0.02\% \\
		& FC10& 1.87\% & 0.06\% & - & 1.81\% & 0.02\% \\
		& CNN& 1.17\% & 0.00\% & - & 0.97\% &0.00\%   \\
		\hline
		\multirow{2}{*}{GTS}& FC1 & 11.15\% & 99.97\% & 0.59\% & 12.27\% & 1.12\%  \\
		& FC10& 12.82\% & 94.86\% & 0.66\% & 11.28\% & 0.14\% \\
		& CNN&7.40\% & 0.00\% & - & 6.73\% &0.00\%  \\	
		\hline
		\multirow{2}{*}{F-MNIST}& FC1 & 10.22\% & 11.22\% & 6.53\% & 9.61\% & 0.37\% \\
		& FC10& 11.73\% & 9.90\% & 7.27\% & 10.53\% & 0.04\% \\
		& CNN& 10.30\% & 0.09\% & - & 8.86\% &0.00\%  
	\end{tabular}
	\label{tab:opt_solutions}
\end{table*}

\ifnotinapp
\begin{table*}[p]
	\centering
	\caption{\textbf{Lower bounds computed by our method.} We report here for the fully connected models in Table \ref{tab:main_exps4} trained with either MMR or MMR+\textit{at} the lower bounds computed by our technique, that is exploiting Theorem \ref{th:main} and integrating box constraints without and with checking additional neighboring regions (improved lower bounds) versus KW \cite{WonKol2018}.\label{fig:comp-lb}}
	\begin{tabular}{l l l || c | c | c |c }
		\multicolumn{4}{c|}{} & KW \cite{WonKol2018} & Theorem \ref{th:main} & our improved \\
		dataset & model & & test error & lower bounds  & lower bounds & lower bounds  \\
		\hline
		\multirow{4}{*}{MNIST}& FC1 & MMR &1.51\% & 0.69 & 0.22 & 0.29\\
		&FC1 & MMR+at &1.59\% & 0.70 & 0.25 & 0.33 \\
		& FC10 & MMR& 1.87\% & 0.48 & 0.31 & 0.33 \\
		& FC10 & MMR+at & 1.35\% & 0.40 & 0.21 & 0.26 \\
		\hline
		\multirow{4}{*}{GTS}& FC1 & MMR & 11.15\% & 0.69 & 0.69 & 0.69  \\
		& FC1 & MMR+at & 11.72\% & 0.72 & 0.72 & 0.72 \\
		& FC10 & MMR & 12.82\% & 0.64 & 0.63 &0.63 \\	
		& FC10 & MMR+at & 13.36\% & 0.64 & 0.63 &0.63 \\	
		\hline
		\multirow{4}{*}{F-MNIST}& FC1 &MMR & 10.22\% & 0.50 & 0.30 & 0.41 \\
		& FC1 &MMR+at & 10.94\% & 0.66 & 0.33 & 0.42 \\
		& FC10 & MMR & 11.73\% &0.68 & 0.56 & 0.64 \\
		& FC10 & MMR+at &11.39\% & 0.67& 0.53 & 0.60 
	\end{tabular}
	\label{tab:our_bounds}
\end{table*}
\fi

\subsection{Further experiments}\label{sec:further_exp}
We present a series of experiments for a detailed understanding of how MMR works. For this section we consider models trained to be $l_2$-robust and evaluate robustness as the average $l_2$-norm of the perturbations necessary to change the class. Lower bounds on this perturbation are computed either with \cite{WonKol2018} or our method (Theorem \ref{th:main}), while upper bounds are provided by Carlini-Wagner $l_2$-attack (CW) \cite{CarWag2016}. We also introduce a second fully connected architecture, FC10, with 10 hidden layers (see Appendix for details).

\textbf{Importance of linear regions maximization:}
In order to highlight the importance of both parts of the MMR regularization, i) penalization of the distance to decision boundary and ii) penalization of the distance to boundary of the polytope, we train, for each dataset/architecture, models penalizing only the distance to the decision boundary, that is the second term in the r.h.s. of \eqref{eq:regularizer} and \eqref{eq:regularizer_bottom_k}. We call this partial regularizer MMR-$d_D$, in contrast to the full version MMR-\textit{full}. Then we compare the lower and upper bounds on the solution of \eqref{eq:advopt} for MMR-$d_D$ and MMR-\textit{full} models. For a fair comparison we consider models with similar test error. We clearly see in Table \ref{tab:MMR2} that the lower bounds, computed by the method presented in \cite{WonKol2018}, are always significantly better when MMR-\textit{full} is used, while the behavior of the upper bounds does not clearly favor one of the two. This result shows that in order to
get good lower bounds one has to increase the distance of the points to the boundaries of the polytope.

\textbf{Guaranteed optimal solutions via MMR:} Theorem \ref{th:main} provides a simple and efficient way to obtain in certain cases the solution of \eqref{eq:advopt}. Although for normally trained networks the conditions are rarely satisfied, we show in Table \ref{tab:opt_solutions} that for the MMR-models for fully connected networks for a significant fraction of the test set we obtain
the globally optimal solution of \eqref{eq:advopt}, that is the true $l_2$-robustness. Moreover, we report how much better our globally optimal solutions are compared to the lower
bounds of \cite{WonKol2018}. Interestingly, we can provably get the true robustness for around $10\%$ of points for F-MNIST and for over $98\%$ of the points on GTS for the 
case of fully connected networks. For these cases the optimal solutions have roughly $7\%$ larger $l_2$-norm for F-MNIST and $0.5\%$ larger for  GTS than the lower bounds. 
Note that currently the MIP \cite{TjeTed2017} is not efficient for $l_2$ even though there is room for improvement.
Globally optimal solutions for larger networks achieved via our method can serve as a test both for lower and upper bounds. This is an important issue as currently large parts of the community relies just on upper bounds of robustness using attack schemes like the CW-attack which we address in the the next paragraph.

\ifpaper
\begin{figure}[t]\centering\includegraphics[scale=0.5]{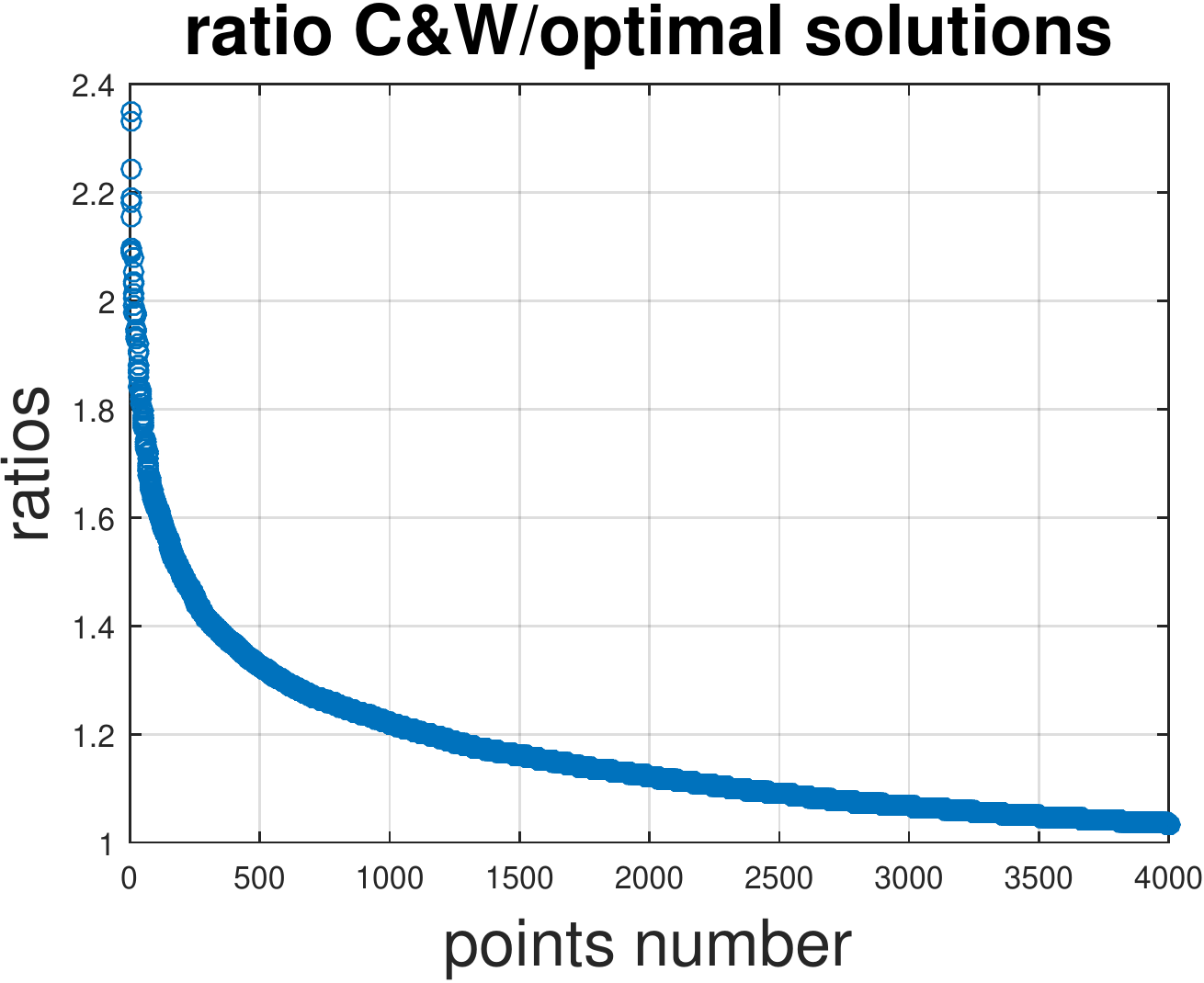}\caption{We report the descending sorted ratios $\norm{\delta_{CW}}_2$/$\norm{\delta_{opt}}_2$ (norm of the outcomes of CW-attack divided by the norm of minimal adversarial examples) with regard to a FC1 model on GTS dataset trained with our regularizer.} \label{fig:CW_long}
\end{figure}
\fi

\textbf{Evaluation of CW-attack:} The CW-attack \cite{CarWag2016} which we use for our upper bounds in Tables \ref{tab:MMR2} and \ref{tab:main_exps4_v1_2} is considered state-of-the-art. Thus it is interesting to see how close it is to the globally optimal solution. On GTS FC1 we find for the MMR model with best test error from Table \ref{tab:main_exps4_v1_2} the globally optimal solution of \eqref{eq:advopt} for 12596 out of 12630 test points. We compare on this subset the optimal norm $\norm{\delta_{opt}}_2$ to $\norm{\delta_{CW}}_2$ obtained by the \ifpaper CW-attack and plot the ratios $\norm{\delta_{CW}}_2$/$\norm{\delta_{opt}}_2$ in descending order in Figure \ref{fig:CW_long} (note that we truncate at 4000 points). \else CW-attack.\fi While the CW-attack performs in general well, there are 2330 points ($18.5\%$ of the test set) where the CW-attack has at least 10\% larger norms and 1145 points (around $9.1\%$) with at least 20\% larger norms. The maximal relative difference is 235\%. Thus at least on a pointwise
basis evaluating robustness with respect to an attack scheme can significantly overestimate robustness, see also \cite{CroHei18}. This shows the importance of techniques to prove lower bounds.
Moreover, the time to compute the adversarial examples by the CW-attack is 16327s, while our technique provides both lower bounds and optimal solutions in 1701s.

\ifnotinapp
\textbf{Comparison of lower bounds:} In Table \ref{tab:our_bounds} we compare, for fully connected models, the lower bounds computed by \cite{WonKol2018} and our technique using Theorem \ref{th:main} 
with integration of box constraints once just checking the initial linear region $Q(x)$ where the point $x$ lies versus also checking neighboring linear regions. We see that
\cite{WonKol2018} obtain better lower bounds, this is why we use their method for the evaluation of the lower bounds. Nevertheless, the gap is not too large and while the lower bounds
are worse, the achieved robustness using our MMR regularization is mostly better as discussed in Table \ref{tab:main_exps4}.
\fi

\ifnotinapp
\subsection{Experimental details}\label{sec:exp_details}
By FC1 we denote a 1 hidden layer fully connected network with 1024 hidden units. By FC10 we denote a 10 hidden layers network that has 1 layer with 124 units, 7 layers with 104 units and 2 layers with 86 units (so that the total number of units is again 1024). The convolutional architecture that we use is identical to \cite{WonKol2018}, which consists of 2 convolutional layers with [16, 32] filters of size 4x4 and 2 fully connected layers with 100 hidden units.
For all the experiments we use batch size 100 and we train the FC models for 300 epochs and the CNNs for 100 epochs. Moreover, we use Adam optimizer \cite{KinEtAl2014} with the default learning rate 0.001. We reduce the learning rate by a factor of 10 for the last 10\% of epochs. For training on CIFAR-10 dataset we apply random crops and random mirroring of the images. \\
For the FC models we use MMR regularizer in the formulation \eqref{eq:regularizer_bottom_k} with $k_B,k_D=10$ for the first 50\% epochs and in the formulation \eqref{eq:regularizer} for the rest of epochs. For CNNs we used the formulation \eqref{eq:regularizer_bottom_k} with fixed $k_D=10$, and we gradually change $k_B$ from 400 hyperplanes in the beginning to 100 hyperplanes towards the end of training. In order to find the optimal set of hyperparameters we performed a grid search over $\lambda$ from \{0.1, 0.25, 0.5, 1.0, 2.0\}, $\gamma_B$ and $\gamma_D$ from \{0.25, 0.5, 0.75\} for CIFAR-10 and GTS, from \{0.25, 0.5, 1.0\} for FMNIST and from \{1.0, 1.5, 2.0\} for MNIST. In order to make a comparison to the robust training of \cite{WonKol2018} we adapted it for the $l_2$-norm, and performed a grid search over the radius of the $l_2$-norm from \{0.05, 0.1, 0.2, 0.3, 0.4, 0.6\} used in their robust loss, aiming at a model with non-trivial lower bounds with a little or no loss in test error. \\
We perform adversarial training using the PGD attack of \cite{MadEtAl2018}. However, since we focus on $l_2$-norm, we adapted the implementation from \cite{Cleverhans2017} to perform the plain gradient update instead of the gradient sign (which corresponds to $l_{\infty}$-norm and thus irrelevant for $l_2$ case) on every iteration. We use the following $l_2$-norm of the perturbation: 2.0 for MNIST, 1.0 for F-MNIST, 0.5 for GTS and CIFAR-10 using the step size of 0.5, 0.25 and 0.125 respectively. We perform 40 iterations of the PGD attack for every batch. During the training, every batch contains 50\% of adversarial examples and 50\% of clean examples.\\
We use the untargeted formulation of the Carlini-Wagner $l_2$ attack in order to evaluate the upper bounds on the $l_2$-norm required to change the class. We use the settings provided in the original paper \cite{CarWag2016} and in their code, including 20 restarts, 10000 iterations, learning rate 0.01 and initial constant of 0.001.
\fi

\section{Conclusion}
We have introduced a geometrically motivated regularization scheme which leads to models which are provably robust according to the current state-of-the-art certification methods of \cite{WonKol2018, TjeTed2017}. In particular, it performs as well as the state-of-the-art models of \cite{WonKol2018} in terms of certified robust test error, and better or similarly to \cite{XiaoEtAl18}. Finally, our method based on the linear regions allows to obtain the globally minimal adversarial perturbation in a significant fraction of cases for large fully connected networks which can be used to test lower and upper bounds.

\section*{Acknowledgements}
We would like to thank Vincent Tjeng for helping to set up the MIP evaluation. Furthermore, we thank Eric Wong and Zico Kolter for adapting their code to the $l_2$-norm, as well as for many helpful discussions.

\bibliographystyle{abbrv}
\let\oldbibliography\thebibliography
\renewcommand{\thebibliography}[1]{%
  \oldbibliography{#1}%
  \setlength{\itemsep}{0pt}%
}

\clearpage

\ifappendix
\appendix
\section{Improving lower bounds}\label{app:improving_bounds}

\subsection{Integration of box constraints into robustness guarantees}\label{app:postproc}
Note that in many applications the input space is not full $\R^d$ but a certain subset $C$ due to constraints e.g. images
belong to $C=[0,1]^d$. These constraints typically increase the norm of the minimal perturbation in \eqref{eq:advopt}. Thus it is important to integrate
the constraints in the generation of adversarial samples (upper bounds) as done e.g. in \cite{CarWag2016}, but obviously we should also integrate them in
the computation of the lower bounds which is in our case based on the computation of distances to hyperplanes. The computation of 
the $l_p$-distance of $y$ to a hyperplane $(w,b)$ has a closed form solution:
\begin{align}\label{eq:dist-hyperplane2}
\min\{\norm{y-x}_p \,|\, \inner{w,x}+b=0\} = \frac{|\inner{w,y}+b|}{\norm{w}_q},
\end{align}
The additional box constraints lead to the following optimization problem,
\begin{align}\label{eq:dist-hyperplane-const2} 
\min\{\norm{y-x}_p \,|\, \inner{w,x}+b=0,\quad x \in [0,1]^d\},
\end{align}
which is convex but has no analytical solution. However, its dual is just a one-dimensional convex optimization problem which can be solved
efficiently. In fact a reformulation of this problem has been considered in \cite{HeiAnd2017}, where fast solvers for $p \in \{1,2,\infty\}$ are proposed.
Moreover, when computing the distance to the boundary of the polytope or the decision boundaries one does not need to solve always the box-constrained
distance problem \eqref{eq:dist-hyperplane-const2}. It suffices to compute first the distances \eqref{eq:dist-hyperplane2} as they are smaller or equal
to the ones of \eqref{eq:dist-hyperplane-const2} and sort them in ascending order. Then one computes the box-constrained distances in the given
order and stops when the smallest computed box-constrained distance is smaller than the next original distance in the sorted list. In this way one typically just needs to solve a very small fraction of all box-constrained problems.  The integration of the box constraints is important as the lower bounds improve on average by 20\% and this can make the difference between having a certified optimal solution and just a lower bound.

\subsection{Checking neighboring regions} \label{subsection:check_neighb_regions}
In order to improve the lower bounds we can use not only the linear region $Q(x)$ where $x$ lies but also some neighboring regions. The following description is just a sketch as one has to handle several case distinctions.

Let $x$ be the original point and $H=\{\pi_1,...,\pi_n\}$ the set of hyperplanes defining the polytope $Q(x)$ sorted so that $d_C(x,\pi_i)< d_C(x,\pi_j)$ if $i<j$, where $d_C$ is the distance including box constraints. If we do not directly get the guaranteed optimal solution, we get an upper bound ($u$, namely the distance to the decision boundary inside $Q(x)$) and a lower bound for the norm of the adversarial perturbation ($l=d_C(x,\pi_1)$). If $l<u$, we can check the region that we find on the other side of $\pi_1$. In order to get the corresponding description of the polytope on the other side, we just have to change the corresponding entry in the activation matrix $\Sigma$ of the layer where $\pi_1$ belongs to and recompute the hyperplanes of the new linear region $R$. Solving \eqref{eq:advopt} on the second region we get a new upper bound if the distance of $x$ to the decision boundary in $R$ is smaller than $u$. Moreover we update $H$ with the hyperplanes given by the second region. Finally, if $u<d_C(x,\pi_2)$ then $u$ is the optimal solution, otherwise $l=d_C(x,\pi_2)$ and we can repeat this process with the next closest hyperplane. At the moment we stop after checking maximally $5$ neighboring linear regions.

\section{Main experiments} \label{app:main_exp}
\subsection{Experimental details}
By FC1 we denote a one hidden layer fully connected network with 1024 hidden units. By FC10 we denote a 10 hidden layers network that has 1 layer with 124 units, seven layers with 104 units and two layers with 86 units (so that the total number of units is again 1024). The convolutional architecture that we use is identical to \cite{WonKol2018}, which consists of two convolutional layers with 16 and 32 filters of size $4 \times 4$ and stride 2, followed by a fully connected layer with 100 hidden units.
For all experiments we use batch size 128 and we train the models for 100 epochs. Moreover, we use Adam optimizer \cite{KinEtAl2014} with the default learning rate 0.001 for all models except for the $l_2$ models on MNIST and F-MNIST where we use the learning rate of $10^{-4}$ for MMR and $5 \cdot 10^{-5}$ for MMR+at. We also reduce the learning rate by a factor of 10 for the last 10 epochs. On CIFAR-10 dataset we apply random crops and random mirroring of the images as data augmentation.\\
For training we use MMR regularizer in the formulation \eqref{eq:regularizer_bottom_k} with $k_D$ equal to the number of classes, and with $k_B$ linearly (wrt the epoch) decreasing from 10\% to 2\% of the total number of hidden units of the particular network architecture. We also use a training schedule for $\lambda$ where we linearly increase it from $\lambda / 10$ to $\lambda$ during the first 10 epochs. We employ both schemes since they increase the stability of training with MMR. \\
In order to determine the best set of hyperparameters $\lambda$, $\gamma_B$, and $\gamma_D$ of MMR, we perform a grid search over them for every dataset and network architecture. In particular, we empirically found that the optimal values of $\gamma_B$ and $\gamma_D$ are equal and are usually 1.5-2 times higher than the $\epsilon$ of robust error. All the reported MMR models and the final set of hyperparameters can be found at \url{https://github.com/max-andr/provable-robustness-max-linear-regions}. \\
In order to make a comparison to the robust training of \cite{WonKol2018} we either take their publicly available models or retrain them using their code. For the main experiments, we set the parameter for KW training equal to the $\epsilon$ for which we check the robust error. For experiments where robustness is evaluated as average lower bound (see Section \ref{sec:further_exp}), we performed a grid search over the radius of the $l_2$-norm used in their robust loss, aiming at a model with non-trivial lower bounds with little or no loss in test error.

We perform adversarial training using the PGD attack of \cite{MadEtAl2018} with 50\% clean and 50\% adversarial examples in every batch. For the $l_2$-norm, we adapted the implementation from \cite{Cleverhans2017} to perform the gradient update normalized by its $l_2$ norm, instead of the gradient sign (which corresponds to $l_{\infty}$-norm and thus irrelevant for $l_2$ case) on every iteration. We use the same $l_2$-bound on the perturbation as the $\epsilon$ used in robust error. During training, we perform 40 iterations of the PGD attack for MNIST and F-MNIST, and 7 for GTS and CIFAR-10. During evaluation, we use 40 iterations for all datasets. The step size is selected as $\epsilon$ divided by the number of iterations and multiplied by 2. \\
We use the untargeted formulation of the Carlini-Wagner $l_2$ attack in order to evaluate the upper bounds on the $l_2$-norm required to change the class. We use the settings provided in the original paper \cite{CarWag2016} and in their code, including 20 iterations of the binary search, 10000 iterations of the optimizer, learning rate 0.01 and initial constant of 0.001.
For the Mixed Integer Programming evaluation we use the library of \cite{TjeTed2017} with the settings of \cite{XiaoEtAl18}, which we obtained via private correspondence. Namely, we use Gurobi as the back-end, LP as the tightening algorithm, and we set 5s timeout for the presolver, and 120s timeout for the main solver.

\ifpaper \else
\begin{figure}[t]\centering\includegraphics[scale=0.5]{CW_opt.pdf}\caption{\textbf{Evaluation of CW-attack.} We report the descending sorted ratios $\norm{\delta_{CW}}_2$/$\norm{\delta_{opt}}_2$ (norm of the outcomes of CW-attack divided by the norm of minimal adversarial examples) with regard to a model on GTS dataset trained with our regularizer.} \label{fig:CW_long2}
\end{figure}
\fi

\begin{table*}[p]
	\centering
	\caption{\textbf{Lower bounds computed by our method.} We report here for the fully connected models trained with either MMR or MMR+\textit{at} the lower bounds computed by our technique, that is exploiting Theorem \ref{th:main} and integrating box constraints without and with checking additional neighboring regions (improved lower bounds) versus KW \cite{WonKol2018}.}
	\begin{tabular}{l l l || c | c | c |c }
		\multicolumn{4}{c|}{} & KW \cite{WonKol2018} & Theorem \ref{th:main} & our improved \\
		dataset & model & & test error & lower bounds  & lower bounds & lower bounds  \\
		\hline
		\multirow{4}{*}{MNIST}& FC1 & MMR &1.51\% & 0.69 & 0.22 & 0.29\\
		&FC1 & MMR+at &1.59\% & 0.70 & 0.25 & 0.33 \\
		& FC10 & MMR& 1.87\% & 0.48 & 0.31 & 0.33 \\
		& FC10 & MMR+at & 1.35\% & 0.40 & 0.21 & 0.26 \\
		\hline
		\multirow{4}{*}{GTS}& FC1 & MMR & 11.15\% & 0.69 & 0.69 & 0.69  \\
		& FC1 & MMR+at & 11.72\% & 0.72 & 0.72 & 0.72 \\
		& FC10 & MMR & 12.82\% & 0.64 & 0.63 &0.63 \\	
		& FC10 & MMR+at & 13.36\% & 0.64 & 0.63 &0.63 \\	
		\hline
		\multirow{4}{*}{F-MNIST}& FC1 &MMR & 10.22\% & 0.50 & 0.30 & 0.41 \\
		& FC1 &MMR+at & 10.94\% & 0.66 & 0.33 & 0.42 \\
		& FC10 & MMR & 11.73\% &0.68 & 0.56 & 0.64 \\
		& FC10 & MMR+at &11.39\% & 0.67& 0.53 & 0.60 
	\end{tabular}
	\label{tab:our_bounds2}
\end{table*}

\begin{table*}[p]
	\centering
	\caption{\textbf{Comparison of different methods regarding robustness wrt $l_2$-norm.} We here report the statistics of 5 different training schemes: \textit{plain} (usual training), \textit{at} (adversarial training \cite{MadEtAl2018}), MMR (ours), MMR+\textit{at} (MMR plus adversarial training) and KW (the robust training introduced in \cite{WonKol2018}). We show test error (TE), average of lower (LB) and upper (UB) bounds on the robustness $\norm{\delta}_2$, where $\delta$ is the solution of \eqref{eq:advopt}. The robustness statistics are computed on the first 1000 points of the respective test sets (including misclassified images) against all the possible target classes.}
	\begin{tabular}{l l | r r r | r r r | r r r}
		\multicolumn{11}{c}{\textbf{$l_2$-norm robustness}}\\ [4mm]
		\multirow{2}{*}{\textit{dataset}}&\textit{training}& \multicolumn{3}{c|}{FC1} & \multicolumn{3}{c|}{FC10}& \multicolumn{3}{c}{CNN} \\
		\cline{3-11}
		&\textit{scheme} & TE(\%) & LB & UB & TE(\%) & LB & UB & TE(\%) & LB & UB\\
		\hline
		\multirow{5}{*}{MNIST}&plain &1.59 & 0.34 &0.98 &1.81 & 0.13 & 0.70 & 0.97 & 0.04 & 1.03 \\
		&at & 1.29 & 0.25 & 1.23 & 0.93 & 0.14 & 1.59 & 0.86 & 0.14 & 1.67\\
		&KW & 1.37 & \textbf{0.70} & \textbf{1.75} & 1.69 & \textbf{0.75} & \textbf{1.74} & 1.04 & 0.32 & 1.84 \\
		\cdashline{2-11}
		&MMR & 1.51 & 0.69 & 1.69 & 1.87 & 0.48 & 1.48 &1.17 & \textbf{0.38} & 1.70 \\
		&MMR+at & 1.59 & \textbf{0.70} & 1.70 & 1.35 & 0.40 & 1.60 & 1.14 & \textbf{0.38} & \textbf{1.86} \\
		\hline
		\multirow{5}{*}{GTS}&plain & 12.24& 0.33 & 0.57 &11.25 & 0.08 & 0.48 & 6.73 & 0.06 & 0.43 \\
		&at & 13.55 &0.34 &0.66 & 13.01 & 0.10 & 0.56 & 8.12 & 0.06 &0.53 \\
		&KW & 13.06 &0.35 &0.63 & 13.56 & 0.16 & 0.52 & 8.44 & \textbf{0.11} & 0.52 \\
		\cdashline{2-11}
		&MMR & 11.15& 0.69 & 0.69  & 12.82 & \textbf{0.64} & \textbf{0.67} & 7.40 & 0.09 & 0.59 \\
		&MMR+at & 11.72 &\textbf{0.72} &\textbf{0.72} & 13.36 & \textbf{0.64} & 0.66 & 10.50 & \textbf{0.11} & \textbf{0.62}\\
		\hline
		\multirow{5}{*}{F-MNIST}&plain & 9.61 & 0.18 & 0.53 & 10.53 & 0.05 & 0.44 &8.86 &0.03 & 0.32 \\
		&at & 9.89 & 0.11 & 1.00& 9.89 & 0.11 & 1.00 & 8.77 & 0.07 &0.80 \\
		&KW & 9.95 & 0.46 & 1.11 & 11.42 & 0.47 & 1.22 & 10.37 & 0.17 & 0.96 \\
		\cdashline{2-11}
		&MMR & 10.22 & 0.50 & 0.85 & 11.73 &\textbf{0.68}& 1.18 & 10.30 & 0.17 & 0.88\\
		&MMR+at & 10.94& \textbf{0.66}&\textbf{1.45} & 11.39 & 0.67 & \textbf{1.24} & 10.48 & \textbf{0.21} & \textbf{1.14}\\
		\hline
	\end{tabular}
	\label{tab:main_exps4_v1_2}
\end{table*}

\begin{table}[t]
	\centering
	\caption{\textbf{Comparison of different training schemes wrt $l_2$-norm.} We here report the statistics relative to models trained with 5 different training scheme: \textit{plain} (usual training), \textit{at} (adversarial training \cite{MadEtAl2018}), MMR (ours), MMR+\textit{at} (the two methods combined) and KW (the robust training introduced in \cite{WonKol2018}). We show test error, average of lower and upper bounds on $\norm{\delta}_2$, where $\delta$ is the solution of problem \eqref{eq:advopt}. The statistics are computed on the first 1000 points of the test set (including misclassified images) against all the possible target classes.}
	\begin{tabular}{ l |  r r r}
		\multicolumn{4}{c}{\textbf{$l_2$-norm robustness on CIFAR-10}}\\[4mm]
		\textit{training}& \multicolumn{3}{c}{CNN} \\
		\cline{2-4}
		\textit{scheme} & TE(\%) & LB & UB\\
		\hline
		plain & 25.98 & 0.02 & 0.16 \\
		at & 25.36 & 0.04 & 0.42\\
		KW & 41.52 & \textbf{0.16} & \textbf{0.66} \\
		\cdashline{1-4}
		MMR & 41.86 & \textbf{0.16} & 0.39 \\
		MMR+at & 41.11 & 0.13 & 0.57 \\
		\hline
	\end{tabular}
	\label{tab:bounds_cifar10_2}
\end{table}

\section{Further experiments} \label{app:further_exp}
\ifpaper \else
\textbf{Evaluation of CW-attack:} In Figure \ref{fig:CW_long2} we show the 4000 largest ratios $\nicefrac{\norm{\delta_{CW}}_2}{\norm{\delta_{opt}}_2}$, that is the norm of the outcomes of CW-attack divided by the norm of minimal adversarial examples, computed on the GTS test set for a model trained with our regularizer. We can see that the perturbations found by the attack are for a significant fraction of the images much larger than the minimal ones.
\fi

\textbf{Comparison to Cross-Lipschitz regularization:} We also consider models trained with Cross-Lipschitz regularization of \cite{HeiAnd2017} since they also consider the robustness wrt the $l_2$ norm. We evaluated upper bounds on the robust error of MNIST-FC1 models using the method of \cite{WonKol2018}, which gives tighter bounds than the original Cross-Lipschitz guarantee of \cite{HeiAnd2017}. As a result, their most provably robust model obtained test error of 1.38\%, and the robust error bounded between 2.5\% given by the PGD attack and 7.2\% by \cite{WonKol2018}. Thus we can observe that Cross-Lipschitz regularization also enhances certifiability in this case since the upper bound is significantly better than the one for adversarial training, 16.9\%. However, both KW and MMR+at provide a better upper bound on robust error, 5.2\% and 6.4\% respectively.

\textbf{Comparison of lower bounds:} In Table \ref{tab:our_bounds2} we compare, for fully connected models, the lower bounds on the distance to the decision boundary computed by \cite{WonKol2018} and our technique using Theorem \ref{th:main} with integration of box constraints once just checking the initial linear region $Q(x)$ where the point $x$ lies versus also checking neighboring linear regions. We see that \cite{WonKol2018} obtain better lower bounds, this is why we use their method for the main evaluation of provable robustness in Table~\ref{tab:main_exps4_2}. Nevertheless, the gap is not too large and while the lower bounds are worse, the achieved robustness using our MMR regularization is mostly better as shown in Table \ref{tab:main_exps4_2}.

\textbf{Analysing $l_2$-robustness with different metrics:} We want here to repeat the experiment of Section \ref{sec:exp} wrt $l_2$-norm but evaluating robustness as the average norm of the perturbation necessary to change the classification of a point.
We can compute for every input a lower bound on the $l_2$-norm of the minimal adversarial perturbation thanks to the method of \cite{WonKol2018} and an upper bound with CW-attack \cite{CarWag2016}. However, when our technique (Theorem \ref{th:main}) provides the optimal solution, we set both lower and upper bounds to this value.\\
We report test error of the model and the average lower and upper bounds in Tables \ref{tab:main_exps4_v1_2} and \ref{tab:bounds_cifar10_2}, computed on 1000 points of the test set. For KW, MMR and MMR + adversarial training we report the solutions which achieve similar test error than the plain model.\\
There are several interesting observations.
First of all, while adversarial training improves the upper bounds compared to the plain setting often quite significantly, the lower bounds almost never improve, often they get even worse. This is in contrast to the methods, KW and our MMR, which optimize the robustness guarantees. For MMR we see in all cases significant improvements of the lower bounds over plain and
adversarial training, for KW this is also true but the improvements on GTS are much smaller. Notably, for the fully connected networks FC1 and FC10 on GTS and F-MNIST , 
the \emph{lower bounds} achieved by MMR and/or MMR+at are \emph{larger} than the \emph{upper bounds} of the plain training for F-MNIST and better than plain and adversarial training as well as KW on GTS. Thus MMR is provably more robust than the competing methods in these configurations. Moreover, the achieved lower bounds of MMR are only worse than the ones of KW on MNIST for FC10. Also for the achieved upper bounds MMR is most of the time better than KW and always improves over adversarial training. For the CNNs the improvements of KW and MMR over plain and adversarial training in terms of lower and upper bounds are smaller than for the fully connected networks and it is harder to maintain similar test performance. The differences between
KW and MMR for the lower bounds are very small so that for CNNs both robust methods perform on a similar level.

\section{Visualizing the structure of provably robust models} Here we analyse the effect of MMR regularization on the gradient of cross entropy loss wrt the input and on the structure of the convolutional filters. We focus on models trained for $l_\infty$-robustness on MNIST, GTS, and CIFAR-10. We provide the plain and adversarially trained model as reference.\\
In Figures \ref{fig:grads_mnist}, \ref{fig:grads_gts}, \ref{fig:grads_cifar10} we visualize the gradients computed for 10 images of the corresponding test sets (shown in the first row) for the different training procedures. For MNIST, zero values are represented in white, negative in blue and positive in red, where every image is rescaled independently. For GTS and CIFAR-10, we follow \cite{TsiEtAl18} and clip the gradient values to $\pm$3 standard deviations and then rescale them to $[0, 1]^d$. We visualize models obtained with plain training (second row), adversarial training \cite{MadEtAl2018} (third), KW robust training \cite{WonKol2018} (fourth), MMR (fifth), MMR + adversarial training (last row). \\
Figures \ref{fig:filters_mnist}, \ref{fig:filters_gts}, \ref{fig:filters_cifar10} show the structure of the weights of the filters of the two convolutional layers. The top five rows are the filters from the first layer, either all of them for MNIST, or the first five for GTS and CIFAR-10 (reshaped from $4\times 4\times 3$ to $4\times 12$). The bottom five are the filters from the second convolution layer (reshaped from $4\times 4\times 16$ to $16\times 16$). We present the filters for plain training (first and sixth row), adversarial training \cite{MadEtAl2018} (second and seventh), KW robust training \cite{WonKol2018} (third and eighth), MMR (fourth and ninth), MMR + adversarial training (fifth and last row). The white color corresponds to zero weights, and the farther the value is from zero, the more intense is the color. Note that each row is also scaled independently.

\textbf{MNIST (Figures \ref{fig:grads_mnist}, \ref{fig:filters_mnist}):}
As noted in \cite{TsiEtAl18}, adversarial training leads to more interpretable gradients that focus on salient features of the digits. All robust training schemes have the effect of creating both interpretable and sparse gradients, which is reasonable considering that a sparse gradient implies that there are less directions along which abrupt variations in the value of the loss are possible. While KW model seems to highlight more the borders of the images, MMR models have the most concentrated gradients.\\
In line with what has been reported in \cite{MadEtAl2018,WonKol2018}, the filters of adversarially trained and provably robust models (Figure \ref{fig:filters_mnist}) have significantly higher sparsity than the plain model. Interestingly, in contrast to the others, MMR models preserve sparsity also in the filters on the second convolutional layer. This seems to be important for obtaining tight certificates by the KW method and for fast verification with the MIP solver.

\textbf{GTS (Figures \ref{fig:grads_gts}, \ref{fig:filters_gts}):} 
For GTS we also observe that all robust training schemes lead to a qualitatively different behaviour of the gradient. In particular, the models become less sensitive to variations of the background, and instead they highlight more the traffic signs. We observe that the gradients for the MMR and adversarially trained models are the most interpretable. We note that the similarity between the gradients of the KW model and MMR+at is possibly due to their similar test error and robustness (Table \ref{tab:main_exps4_2}).\\
Similarly to MNIST, we observe that the convolutional filters are sparser for the adversarially trained model than for the plain model. The filters of KW and MMR models are the sparsest, while MMR+at is less sparse and resembles more the adversarially trained model.

\textbf{CIFAR-10 (Figures \ref{fig:grads_cifar10}, \ref{fig:filters_cifar10}):} First of all, we note that the considered shallow CNNs are not able to achieve very low clean test error (the best is 24.63\% for the plain model and 34.61\% for provably robust models). This may explain why we cannot observe  interpretable gradients as clearly as in \cite{TsiEtAl18} even for the adversarially trained model. However, we note that the gradients for the robustly trained models are still qualitatively different from the plain model. While the gradients of the plain model just consist of high-frequency noise, e.g. the MMR model concentrates more on the objects rather than on the background. \\
For the convolutional filters of the CIFAR-10 models, we make conclusions similar to GTS. In particular, the filters of KW and MMR models are the sparsest, while MMR+at is less sparse and is more similar to the adversarially trained model.

\newcommand{\multicell}[2][c]{%
	\begin{tabular}[#1]{@{}c@{}} #2 \end{tabular}}

\begin{figure*}[p]
	\begin{tabular}{cc}
		\small
		\multicell[b]{Original \\ image \\ \\ \\ \\ plain \\ \\ \\ \\ at \\ \\ \\ \\ KW \\ \\ \\ \\ MMR \\ \\ \\ \\ MMR+at \\ \\ \\ \\} & \centering\hspace{-3mm}\includegraphics[width=1.9\columnwidth]{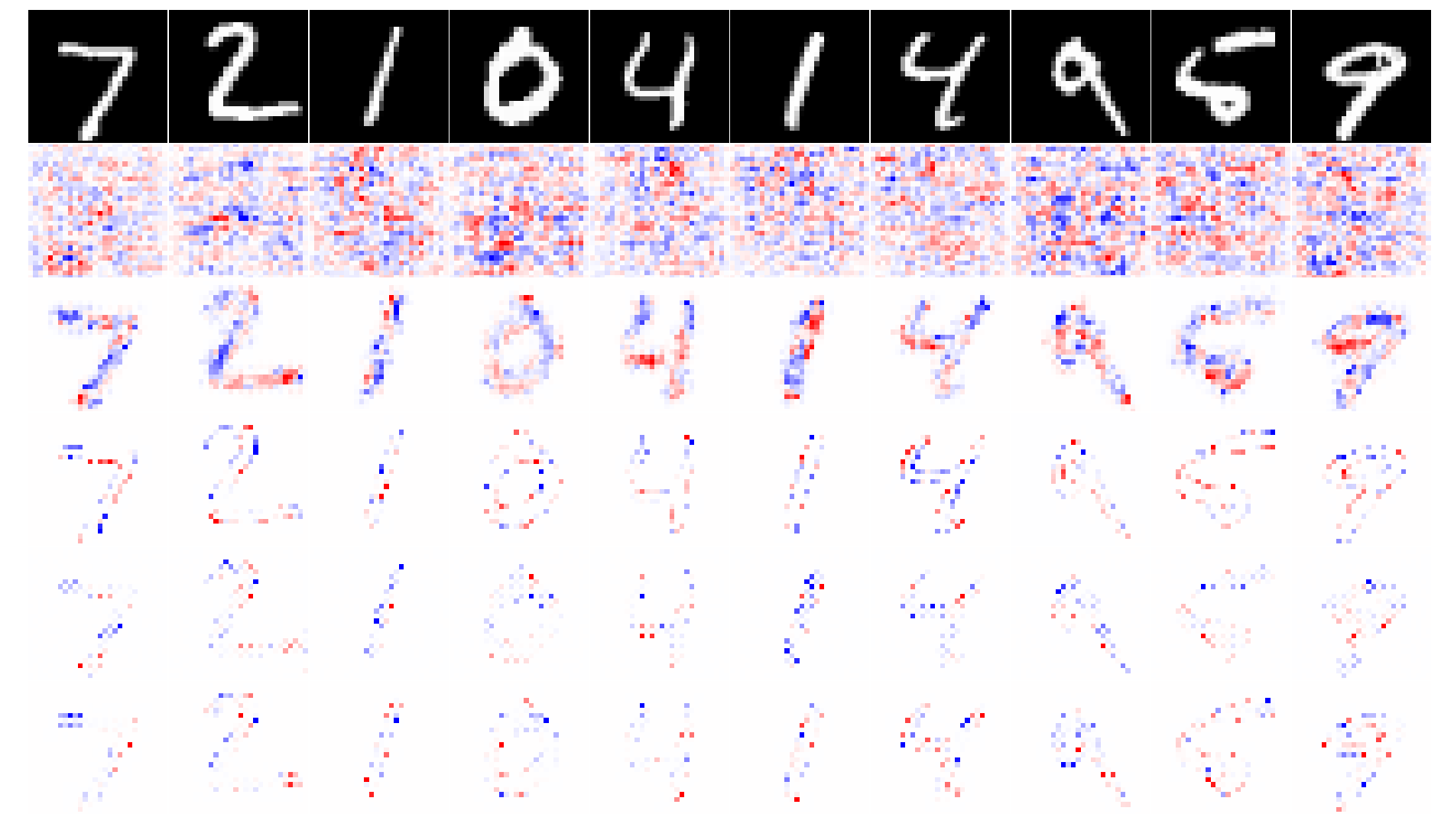} 
	\end{tabular}

	\caption{\textbf{Gradient of $l_\infty$-robust models.} We visualize the gradients of the cross entropy loss wrt the input for different images of MNIST test set for every model: plain training, adversarial training \cite{MadEtAl2018}, KW robust training \cite{WonKol2018}, MMR, MMR + adversarial training. We can see for robust models the gradients are much sparser, while only for plain training it does not clearly highlights relevant features.} \label{fig:grads_mnist}
\end{figure*}

\begin{figure*}[p]
	\begin{tabular}{cc}
		\small
		\multicell[b]{plain \\ \vspace{2.5mm} \\ at \\ \vspace{2.5mm} \\ KW \\ \vspace{2.5mm} \\ MMR \\ \vspace{2.5mm} \\ MMR+at \\ \vspace{1mm}} &
		\centering\includegraphics[width=1.8\columnwidth]{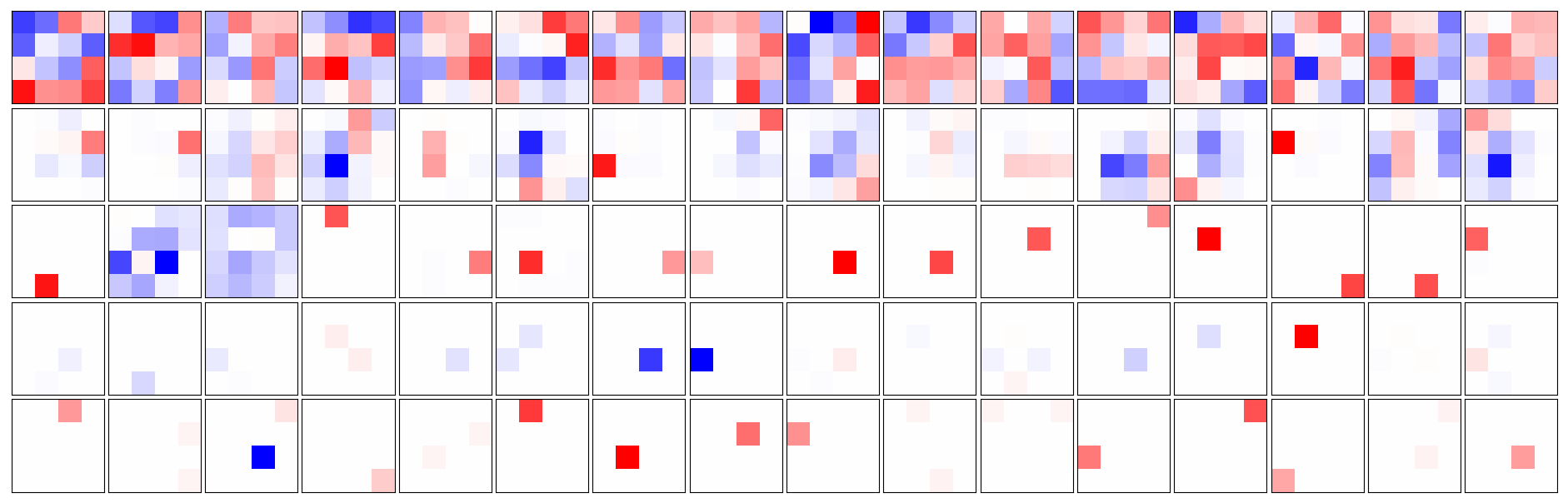}
	\end{tabular}
	\\
	\begin{tabular}{cc}
		\small
		\multicell[b]{plain \\ \vspace{2.5mm} \\ at \\ \vspace{2.5mm} \\ KW \\ \vspace{2.5mm} \\ MMR \\ \vspace{2.5mm} \\ MMR+at \\ \vspace{1mm}} &
		\centering\includegraphics[width=1.8\columnwidth]{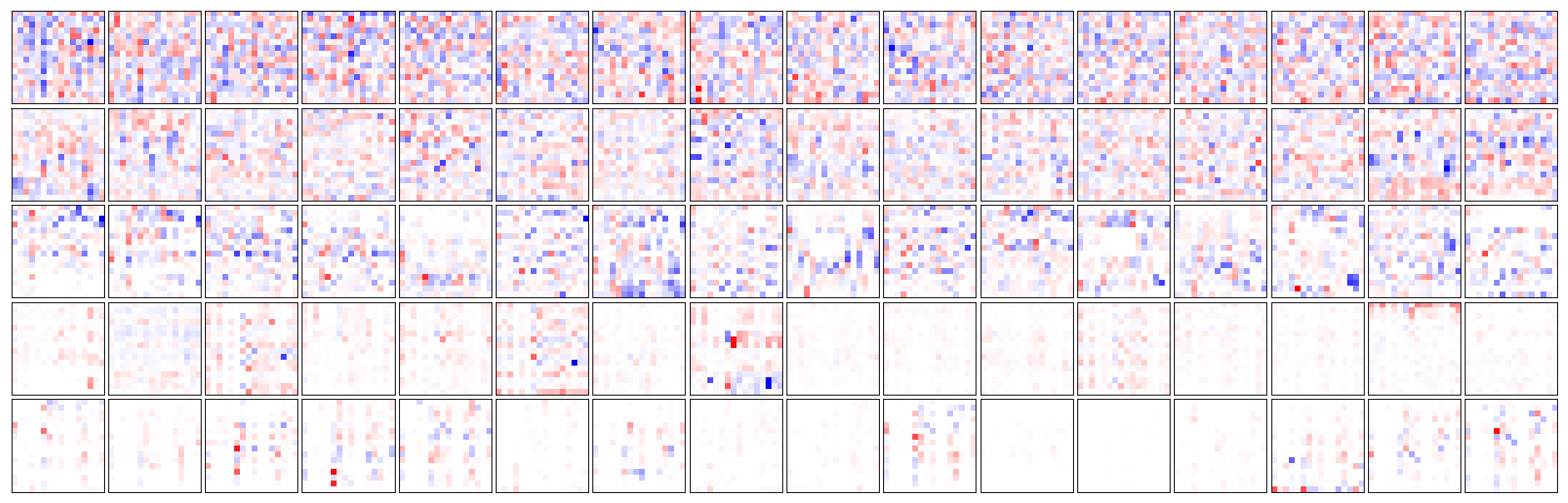}
	\end{tabular}
	\caption{\textbf{Filters of $l_\infty$-robust models.} We visualize all the filters of the first convolutional layer (first five rows) and 16 filters of the second layer (last five rows) for every MNIST model (trained for $l_\infty$-robustness): plain training, adversarial training \cite{MadEtAl2018}, KW robust training \cite{WonKol2018}, MMR, MMR + adversarial training. We can see that MMR and MMR+at leads to very sparse filters, especially in the second layer. Note that each row is rescaled independently.} \label{fig:filters_mnist}
\end{figure*}

\begin{figure*}[p]
	\begin{tabular}{cc}
		\small
		\multicell[b]{Original \\ image \\ \vspace{8mm} \\ plain \\ \vspace{7.5mm} \\ at \\ \vspace{7.5mm} \\ KW \\ \vspace{7.5mm} \\ MMR \\ \vspace{7.5mm} \\ MMR+at \\ \vspace{6mm}} & \centering\hspace{-3mm}\includegraphics[width=1.85\columnwidth]{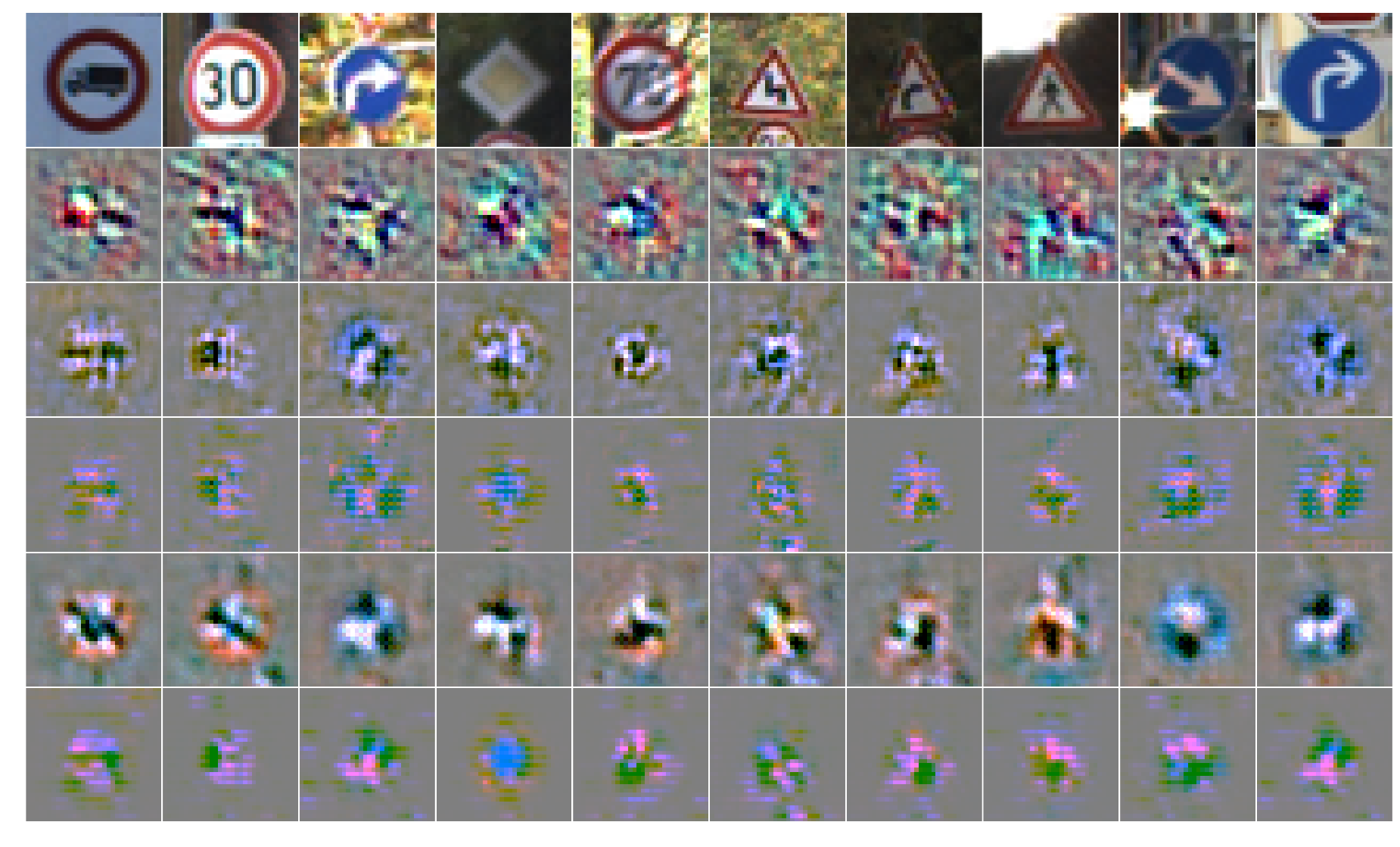} 
	\end{tabular}
	 \caption{\textbf{Gradient of $l_\infty$-robust models.} We visualize the gradients of the cross entropy loss wrt the input for different images of GTS test set (first row) for every model: plain training, adversarial training \cite{MadEtAl2018}, KW robust training \cite{WonKol2018}, MMR, MMR + adversarial training. We can see for robust models the gradients are much sparser, while only for plain training it does not clearly highlights relevant features.} \label{fig:grads_gts}
\end{figure*}

\begin{figure*}[p]
	\begin{tabular}{cc}
		\small
		\multicell[b]{plain \\ \vspace{2.5mm} \\ at \\ \vspace{2.5mm} \\ KW \\ \vspace{2.5mm} \\ MMR \\ \vspace{2.5mm} \\ MMR+at \\ \vspace{1mm}} &
		\centering\includegraphics[width=1.55\columnwidth]{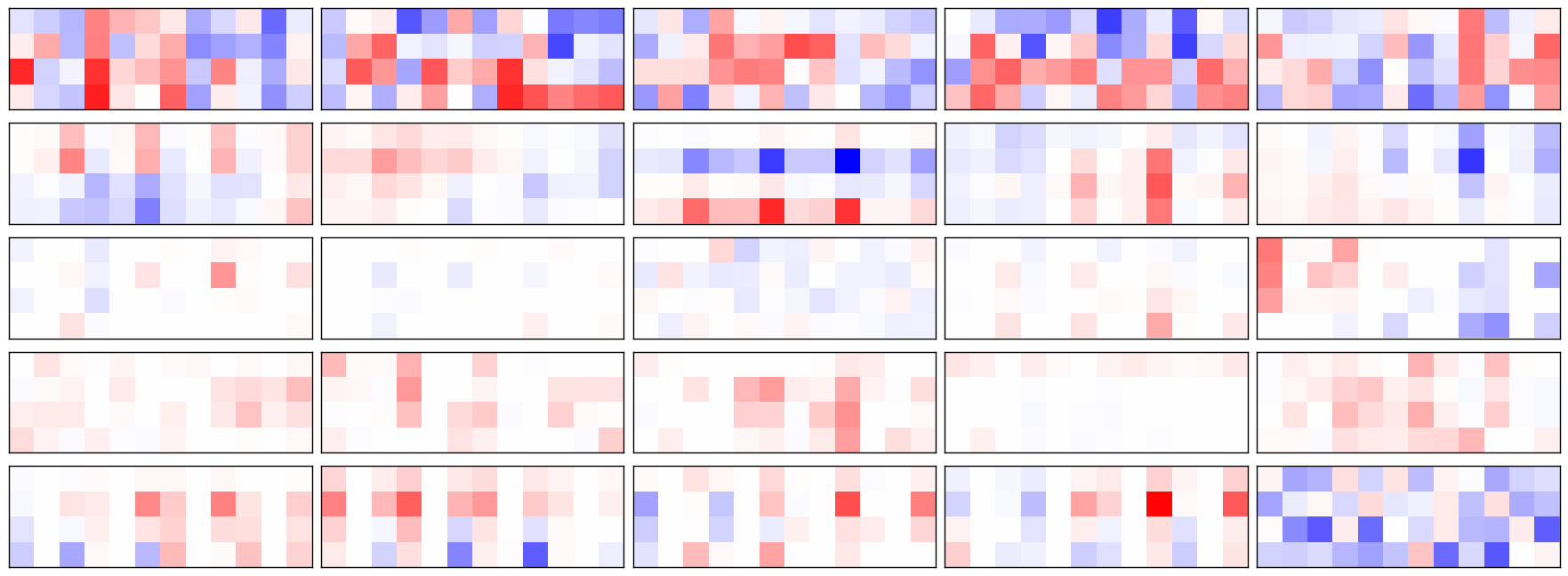}
	\end{tabular}
	\\
	\begin{tabular}{cc}
		\small
		\multicell[b]{plain \\ \vspace{2.5mm} \\ at \\ \vspace{2.5mm} \\ KW \\ \vspace{2.5mm} \\ MMR \\ \vspace{2.5mm} \\ MMR+at \\ \vspace{1mm}} &
		\centering\includegraphics[width=1.8\columnwidth]{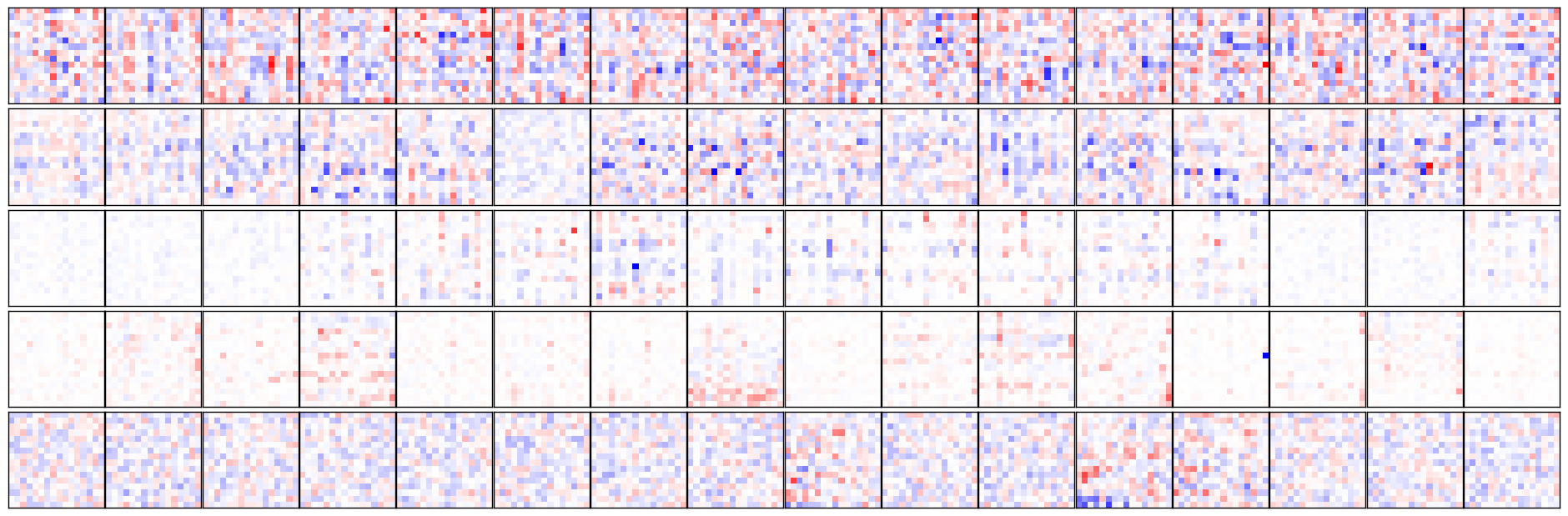}
	\end{tabular}
	\caption{\textbf{Filters of $l_\infty$-robust models.} We visualize the first five filters of the first convolutional layer (first five rows) and 16 filters of the second layer (last five rows) for every GTS model (trained for $l_\infty$-robustness): plain training, adversarial training \cite{MadEtAl2018}, KW robust training \cite{WonKol2018}, MMR, MMR + adversarial training. We can see that MMR leads to sparser filters, especially in the second layer. Note that each row is rescaled independently.} \label{fig:filters_gts}
\end{figure*}

\begin{figure*}[p]
	\begin{tabular}{cc}
		\small
		\multicell[b]{Original \\ image \\ \vspace{8mm} \\ plain \\ \vspace{7.5mm} \\ at \\ \vspace{7.5mm} \\ KW \\ \vspace{7.5mm} \\ MMR \\ \vspace{7.5mm} \\ MMR+at \\ \vspace{6mm}} & \centering\hspace{-3mm}\includegraphics[width=1.85\columnwidth]{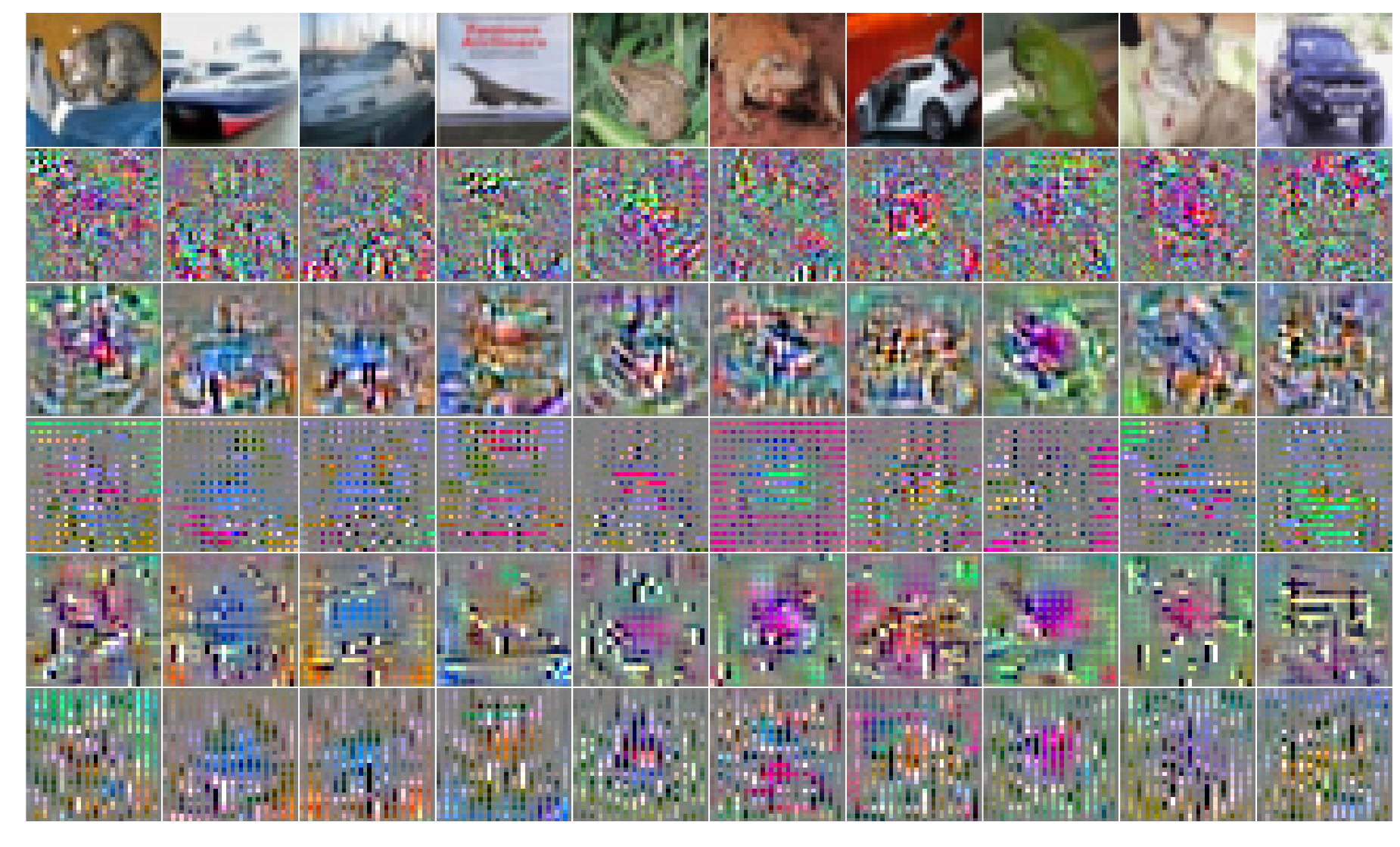} 
	\end{tabular}
	\caption{\textbf{Gradient of $l_\infty$-robust models.} We visualize the gradients of the cross entropy loss wrt the input for different images of CIFAR-10 test set for every model: plain training, adversarial training \cite{MadEtAl2018}, KW robust training \cite{WonKol2018}, MMR, MMR + adversarial training. We can see for robust models the gradients are sparser than for plain training and they tend to highlight relevant features of the images.} \label{fig:grads_cifar10}
\end{figure*}

\begin{figure*}[p]
	\begin{tabular}{cc}
		\small
		\multicell[b]{plain \\ \vspace{2.5mm} \\ at \\ \vspace{2.5mm} \\ KW \\ \vspace{2.5mm} \\ MMR \\ \vspace{2.5mm} \\ MMR+at \\ \vspace{1mm}} &
		\centering\includegraphics[width=1.55\columnwidth]{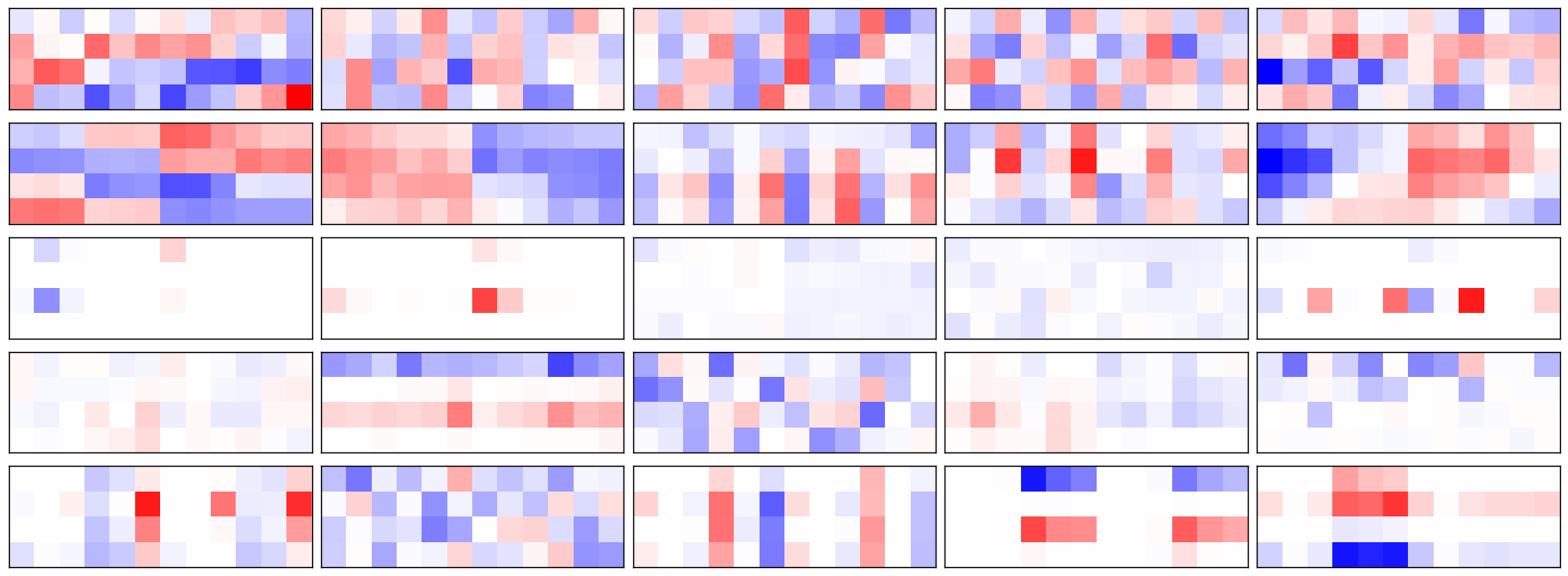}
	\end{tabular}
	\\
	\begin{tabular}{cc}
		\small
		\multicell[b]{plain \\ \vspace{2.5mm} \\ at \\ \vspace{2.5mm} \\ KW \\ \vspace{2.5mm} \\ MMR \\ \vspace{2.5mm} \\ MMR+at \\ \vspace{1mm}} &
		\centering\includegraphics[width=1.8\columnwidth]{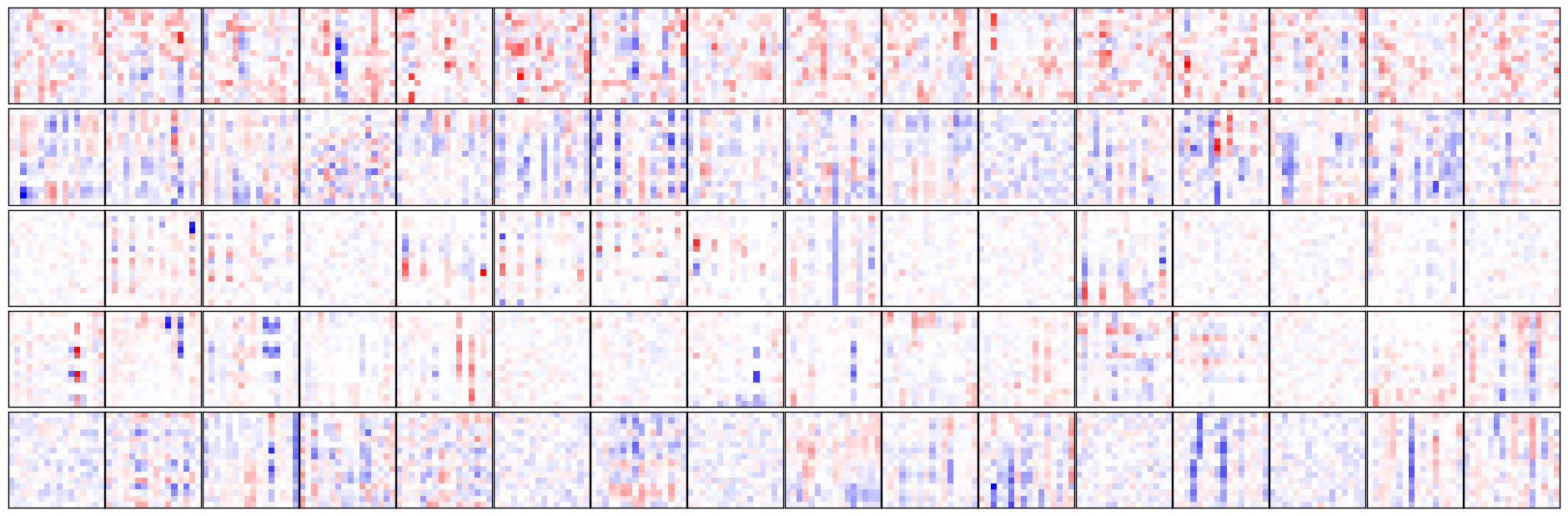}
	\end{tabular}
	\caption{\textbf{Filters of $l_\infty$-robust models.} We visualize the first five filters of the first convolutional layer (first five rows) and 16 filters of the second layer (last five rows) for every CIFAR-10 model (trained for $l_\infty$-robustness): plain training, adversarial training \cite{MadEtAl2018}, KW robust training \cite{WonKol2018}, MMR, MMR + adversarial training. We can see that MMR and KW lead to sparser filters, especially in the second layer. Note that each row is rescaled independently.} \label{fig:filters_cifar10}
\end{figure*}

\fi

\end{document}